\documentclass[letterpaper,twocolumn,10pt]{article}
\usepackage{usenix-2020-09}
\usepackage{xurl} % loaded after hyperref (via usenix-2020-09) for URL line breaking

\usepackage{amsmath}
\usepackage{amssymb}
\usepackage{amsthm}
\newtheorem{proposition}{Proposition}
\usepackage{listings}
\usepackage{booktabs}
\usepackage{array}
\usepackage{float}
\usepackage{needspace}
\usepackage{balance}
\usepackage{xcolor}
\usepackage{filecontents}

\usepackage{tikz}
\usetikzlibrary{positioning,arrows.meta,fit,calc}

\begin{document}
%-------------------------------------------------------------------------------

\date{}

\title{\Large \bf If Agents Were Angels, No Governance Would Be Necessary:\\
Out-of-Band Policy Enforcement at a Trusted Tool Boundary}

\author{
\begin{tabular}{cc}
{\rm Marc Millstone} & {\rm Tyler Akidau} \\
\texttt{marc@redpanda.com} & \texttt{takidau@redpanda.com} \\[0.45em]
{\rm Johannes Br\"uderl} & {\rm Marat Pekker} \\
\texttt{johannes@redpanda.com} & \texttt{marat.pekker@redpanda.com} \\[0.45em]
\multicolumn{2}{c}{Redpanda Data}
\end{tabular}
}

\maketitle

%-------------------------------------------------------------------------------
% Front matter reconciled to the frozen canonical evaluation.
\begingroup
\setlength{\topsep}{0pt}
\setlength{\partopsep}{0pt}
\setlength{\parskip}{0pt}
\begin{abstract}
\vspace{0.75em}
%-------------------------------------------------------------------------------
Give an agent a human's credential and it inherits the person's reach, but not the way a person normally exercises it. A human opens a record through an interface and chooses what to carry elsewhere. An agent can sweep every reachable record and pour returned text directly into model context, where an instruction hidden in the data can steer its next call. Every request can remain credential-valid while the agent violates its job or absorbs a secret it should never see. An otherwise allowed call may also need to pause when its arguments or external state trigger a business rule. Prompts are a brittle last guardrail because they ask the same fallible reasoner to interpret the task and enforce its own limits.

We present Out-of-Band Policy Enforcement (OBPE), a trusted tool boundary outside agent reasoning. It mediates the whole exchange, authorizing the typed operation and resource, narrowing a query or shaping arguments before the backend call, then filtering records and fields, redacting matching content, or masking values in the response. Semantic gating considers argument values or external state, so an authorized call can still be denied or held for an external decision. A data policy owner sets the maximum grant, and agent policy can only narrow it. We formalize this staged composition and prove, under stated conditions, that its policy plan is unaffected by match order, while agent-specific policy cannot widen the owner's ceiling. Field removal guarantees only that one value is absent from one execution; masking and event-history rules carry narrower claims.

We release a simplified HTTP proxy prototype based on our production OBPE implementation. It uses Cedar, a typed authorization language, with conformance tests tying code to model. Against controlled Jira and ServiceNow mocks, the benchmark compares prompted agents with and without the evaluated OBPE bundle across four models, including 20 response-adaptive red-team tasks. A deterministic trace failure means that protected data entered agent context, an exact protected value appeared in the answer, or a forbidden backend effect completed. In 3,621 trials, its observed rate in the primary contrast fell from 57.6\% to 0.2\%; the equally cluster-weighted reduction was 41.2 points [95\% confidence interval: 27.7, 54.9]. Unweighted standalone fulfillment fell from 79.1\% to 60.9\%. In the paired, cluster-weighted analysis, safe-useful completion rose by 21.8 points [9.5, 35.2] under the primary judge and 19.1 points under a second. Safe-useful credits fulfilled allowed work that follows the archived tool contract and avoids both trace failure and judged disclosure of a protected fact, even without its exact value. Four final answers nevertheless reconstructed an exact value that never entered agent context, and a separate trial used filtered row counts as an oracle. These failures mark the line between shaping one execution and proving noninterference. Production write controls, durable human approval, and temporal and aggregate policies fall outside this evaluation.
\end{abstract}
\endgroup

%-------------------------------------------------------------------------------
\section{Introduction}
%-------------------------------------------------------------------------------
If agents were angels, no governance would be necessary~\cite{federalist51}. Credentials constrain access, but they do not carry a worker's judgment into an autonomous loop. An agent need not be malicious, or even disobedient, to exceed the job it was given. A tool-using large language model (LLM) agent can chain retrieval and action under borrowed authority. Returned data becomes part of its reasoning state, and that state helps choose the next call. The credential supplies reach while the judgment that normally limits its use remains outside the protocol.

Consider an agent asked to summarize issues from one engineering project. Its service credential can read many. OAuth scopes can narrow an access token~\cite{rfc6749}; Rich Authorization Requests can carry finer authorization details, and token exchange can issue a downstream token for a particular resource~\cite{rfc9396,rfc8693}. These mechanisms help when the authorization service knows the assignment and every backend can honor the result. The credential actually supplied to an agent may still cover far more. To the backend, two project reads are valid; to the organization, only one belongs to the job. The agent has inherited a wrong-sized envelope: the credential says what is reachable, not what belongs to this assignment. OBPE narrows that envelope at the exchange itself, down to operations, resources, records, and fields.

Even the right issue can be unsafe to return intact. Its description may contain a secret, or text crafted to behave as an instruction once it enters model context. Indirect prompt injection exploits exactly this collapse between data and instruction~\cite{greshake2023}. Denying the whole record protects the secret but may also destroy the useful answer. The boundary instead needs \emph{exposure control}: a way to restrict what reaches the agent and what agent-authored values may return, while preserving the safe part of a permitted exchange.

Some calls pass both tests and still should not proceed automatically. A deployment can be within the agent's role and reveal no protected data, yet require approval after hours. A series of individually acceptable actions can cross a cumulative limit. These are \emph{semantic gating} decisions. They depend on argument values or external state, and their result may be to proceed, deny, or wait for a decision made outside the agent.

\textbf{Why the model should not be the policy authority.} A prompt asks the same reasoner that interprets untrusted data to enforce its own limits, so a successful injection can alter the expected guard. Tool-call evaluations likewise find that text refusal does not reliably predict whether an agent will take the forbidden action~\cite{mindgap2026}. Formal limits make narrower, model-specific points: Hasan couples compliance to high Kolmogorov complexity in a fixed sound theory, while Vassilev studies semantic prompt classification over an adversarial-input model~\cite{hasan2026,vassilev2026}. Neither says that a particular attack is undetectable or that a bounded typed policy decision is undecidable. Model defenses can help, but should not carry the authoritative guarantee. Backend authorization is authoritative but answers what the credential may do, not what this agent should do for this task. The missing control point is the exchange, before a proposed operation or its response escapes policy.

\textbf{OBPE design.} Out-of-Band Policy Enforcement (OBPE) places that control point between the agent's tool client and the backend. A \emph{connector} maps each call into typed principals, actions, resources, records, and fields. The boundary sees the proposed call before it can cause an effect and the complete response before it can enter model context. It can return a narrower exchange than the backend credential permits, or no exchange at all. Under our assumptions, the agent cannot choose its policy, spoof its bound identity, or bypass enforcement.

The design also decides who gets the last word. A \emph{data policy owner}, the party responsible for the data source, writes an \emph{owner policy} that establishes the maximum grant. A separate \emph{agent policy} narrows that grant for one registered agent. Only the owner tier can create permit eligibility. An agent builder can make an agent less capable, but cannot use the agent tier to recover authority the owner policy withheld. The guarantee depends on trusted identity binding and on the owner policy accurately stating the intended ceiling.

\textbf{Contributions.} This paper makes four evidence-separated contributions. First, it explains why agent delegation requires authorization, exposure control, and semantic gating at the tool exchange rather than in model reasoning alone. Second, it presents a typed enforcement pipeline and a two-tier policy model in which a data policy owner supplies the ceiling and agent policy can only narrow it. Third, it gives an order-independent, monotone composition law with explicit assumptions and limited guarantees for masking and history-dependent rules. Finally, it connects the model to a released HTTP enforcement path through conformance and a cross-model evaluation that measures forbidden exposure and effects alongside useful completion.

\textbf{Composition and guarantees.} Gating and shaping operators act at different stages, and an accidental order can change what reaches the backend or the model. We define a staged composition that chooses the most restrictive applicable result within each stage. Under stated conditions, the policy plan is independent of match order, and a well-formed agent restriction cannot enlarge the owner-only plan. We prove both properties in the model and test the released path for conformance.

The guarantees stop where their evidence stops. Removing a field can establish that the selected value was absent from one execution. It does not prove that protected information influenced no other observable behavior. Pattern masking over free text is best effort, while policies over time or event counts require a trustworthy ordered history. We state these boundaries beside the corresponding claims instead of treating every transformation as information-flow noninterference.

\textbf{System and evidence boundary.} We release an HTTP reference
implementation of the OBPE contract. Controlled connectors and mocks replace
live services and Model Context Protocol (MCP) transport so policy selection,
shaping, and backend effects remain observable. Cedar, a typed authorization
language, supplies an analyzable permit-or-forbid core~\cite{cedar2024}; OBPE
annotations specify the surrounding gates and transformations. The private
production system supplied requirements for remote identity, durable approval,
broader schemas, and history policy. Every empirical claim rests on the
released implementation, its conformance suite, and controlled backends.

\textbf{Evaluation contract.} We separate what the model proves, what conformance establishes about code, what the benchmark observes, and what we know only from production design. The benchmark uses the same frozen tasks across treatment arms and models, then subjects them to matched red-team pressure. Exact exposure, policy decisions, and backend effects come from deterministic traces. LLM judges assess semantic disclosure and useful completion separately; they do not override those traces. Across 3,621 trials, the trace-failure rate in the primary prompted-agent cells falls from 57.6\% without OBPE to 0.2\% with it, while safe-useful completion improves under OBPE across all four agent models. Section~\ref{sec:eval} gives the paired estimates and the failure cases that bound them.

%-------------------------------------------------------------------------------
\section{Threat Model and Security Goals}
\label{sec:threat}
%-------------------------------------------------------------------------------
We treat the agent as untrusted: it may propose any call and arguments it can express, whether because of an attack, model failure, or ordinary task pursuit. The guarantees do not depend on telling those causes apart. Even a cooperative agent can wander beyond its assignment because the credential allows it, carry a sensitive value from one tool to another, or execute a consequential call that looked locally reasonable. At that point, borrowed authority exceeds the job.

\textbf{Adversary.} An attacker may control the task given to the agent or influence text returned by a permitted read. It can interact repeatedly and use direct requests, instruction injection, encoding, or multi-step reconstruction. Its goals are to recover protected data, cause an unauthorized backend effect, escape the agent-policy restriction, or cross a tenant boundary. The attacker cannot modify policy state, spoof the platform-bound agent identity, or bypass the mediated tool path.

An attacker is not required for information to escape. A read tool can place a secret in model context, after which an ordinary send or post carries it to a sink. Private data access, untrusted content, and external communication form the ``lethal trifecta''~\cite{willison2025}; an agent with all three can cause harm while following its task. The policy must therefore remove or gate a leg of that path instead of relying on the model to recognize malicious intent.

\textbf{Security goals.} Authority begins with the owner policy, which bounds
what the backend credential may expose or change. Agent policy may narrow that
result but never widen it. Within the supported path, the boundary filters
prohibited records, keeps structurally removed or fully redacted values out of
model context for that execution, and prevents forbidden backend effects.
Pattern masking carries only a best-effort claim.

A gated call introduces a second obligation. It must wait on a trusted channel
that the agent cannot read or forge, and only an authorized approver may resolve
it. A system that later resumes the work must bind approval to every
policy-relevant part of the durable held request. Approval releases that request
under the owner ceiling; it cannot rewrite the call, reverse an owner denial,
or grant either party direct backend access. Identity, policy-loading, or
evaluation failure closes the path. These goals cover the mediated exchange,
not information the model learned elsewhere.

\textbf{Trusted boundary.} We trust the platform, proxy, connector, data policy owner, and backend execution. The trusted platform binds the registered agent's identity to each tool call outside agent-controlled arguments. The proxy uses that identity to select the applicable agent policy; otherwise an agent could shed its narrower bundle and rise to the owner-policy ceiling. The connector must map the complete request and response into the typed objects the policy expects. Every relevant call must traverse the proxy under complete mediation~\cite{saltzerschroeder1975}, and the backend must execute the mediated request according to that model. We treat the model, its prompt, tool-returned text, and agent-authored arguments as untrusted.

Trust in the data policy owner has a precise limit. Given a Cedar type schema
that faithfully represents a frozen connector mapping, Cedar Analysis can
check the permit-or-forbid core for conflicts, ineffective rules, or complete
denials~\cite{cedaranalysis2025}. The released YAML mapping is not such a
schema, and the artifact does not run this analysis. In any case, those checks
cover neither OBPE shaping nor organizational intent. OBPE enforces the written
ceiling and makes its impact inspectable; a human remains responsible for
choosing the right one.

Out of scope are timing and cache side channels, compromise of the trusted
platform or proxy, malicious policy authors, supply-chain attacks, policy
inference by probing, and paths that bypass mediation. Inference from allowed
query results is different: it is an in-scope residual risk. We measure one
count oracle but do not claim to close that channel.

In a multi-agent chain, OBPE governs a downstream call only when the platform
binds the identity of the agent making it. The boundary does not secure
agent-to-agent messages or reconstruct context the platform discarded, and it
makes no general noninterference claim beyond the scoped absence guarantees in
the model.

%-------------------------------------------------------------------------------
\section{Architecture and Enforcement Path}
\label{sec:overview}
%-------------------------------------------------------------------------------

A tool exchange crosses the boundary twice, first as an agent's proposed call and again as the backend response. OBPE can stop a scoped forbidden effect before dispatch and shape protected data before it enters model context, but only if neither path can route around policy. Under the assumptions of \S\ref{sec:threat}, every relevant call reaches the backend through the boundary and every complete response returns through it (Figure~\ref{fig:arch}). Hiding a tool from discovery is insufficient because a tool that remains callable still needs enforcement on its execution path.

OBPE first uses the platform-bound identity from \S\ref{sec:threat} to select the applicable agent policy. Because that identity arrives outside agent-controlled arguments, the call cannot choose its own policy. A \emph{connector} then translates the tool exchange into the typed objects policy understands: principal, action, resource, records, and fields. This is where generic tool traffic acquires meaning such as ``update this issue'' or ``return these customer records.'' If the mapping omits a callable surface or a policy-relevant field, that path falls outside the model and must be denied or governed separately. The backend credential stays behind the boundary.

\begin{figure}[!tb]
\centering
\begin{tikzpicture}[
  font=\small,
  box/.style={draw, rounded corners, minimum height=8mm, align=center, inner sep=3pt},
  gw/.style={draw, rounded corners, align=center, inner sep=5pt},
  bundle/.style={draw, dashed, rounded corners, align=center, inner sep=2.5pt, fill=black!4, font=\scriptsize},
  stage/.style={draw, rounded corners, minimum height=6mm, align=center, inner sep=2pt, fill=black!3, font=\footnotesize},
  sidebox/.style={draw, rounded corners, align=center, inner sep=2.5pt, fill=black!3, font=\scriptsize, text width=19mm},
  flow/.style={-{Stealth[length=2.2mm]}, thick},
  side/.style={-{Stealth[length=2.2mm]}, thick, dashed},
]

% vertical spine: agent -> gateway -> backend
\node[box, fill=black!3, minimum width=38mm] (agent) {LLM Agent\\{\footnotesize (untrusted reasoning)}};
\node[gw, below=13mm of agent, minimum width=50mm, minimum height=30mm] (gw) {};
\node[anchor=north, font=\small\bfseries] at (gw.north) {OBPE Gateway};
\node[bundle] (bs) at ([xshift=-12.5mm,yshift=2mm]gw.center) {Owner policy\\(data policy\\owner)};
\node[bundle] (ba) at ([xshift=12.5mm,yshift=2mm]gw.center) {Agent policy\\(restriction)};
\node at ([yshift=2mm]gw.center) {$\blacktriangleright$};
\node[stage] (gate) at ([yshift=-9.5mm]gw.center) {gate\,/\,shape stages};
\draw[flow] (bs.south) -- ([xshift=-12.5mm]gate.north);
\draw[flow] (ba.south) -- ([xshift=12.5mm]gate.north);
\node[box, fill=black!3, below=11mm of gw, minimum width=38mm] (saas) {Backend};

% read/write path arrows
\draw[flow] ([xshift=-8mm]agent.south) -- node[left, font=\footnotesize] {request} ([xshift=-8mm]gw.north);
\draw[flow] ([xshift=8mm]gw.north) -- node[right, font=\footnotesize] {shaped response} ([xshift=8mm]agent.south);
\draw[flow] ([xshift=-8mm]gw.south) -- ([xshift=-8mm]saas.north);
\draw[flow] ([xshift=8mm]saas.north) -- node[right, font=\footnotesize] {response} ([xshift=8mm]gw.south);

% approval channel: level with the agent, on a path the agent cannot reach
\node[sidebox, right=18mm of agent] (appr) {Out-of-band approval channel};
\draw[side] ([yshift=8mm]gw.east) -- node[above=0.5pt, sloped, font=\scriptsize] {defer} ([xshift=-2mm]appr.south);
\draw[side] ([xshift=5mm]appr.south) -- node[below=0.5pt, sloped, font=\scriptsize] {approve\,/\,deny} ([yshift=2mm]gw.east);

% audit log: level with the backend
\node[sidebox, right=18mm of saas] (audit) {Audit log};
\draw[flow] ([yshift=-8mm]gw.east) -- node[above=0.5pt, sloped, font=\scriptsize] {every decision} (audit.west);

\end{tikzpicture}
\caption{OBPE mediates a complete tool exchange. The tier-aware composition gives the owner policy permit authority and lets agent policy narrow it. Request shaping and gating run before dispatch; response shaping runs before data enters model context. A deferred call waits outside the agent's reach, and the boundary emits an audit record with its decision.}
\label{fig:arch}
\end{figure}

For each call, the owner policy creates permit eligibility and sets the maximum
grant; agent policy can only remove from it. The gate resolves before any
backend effect. A denial stops there, while a deferral holds a snapshot without
dispatch and expires to denial if no trusted decision arrives. A permit proceeds
through request shaping, which can narrow a query, cap results, or change typed
write arguments. On the return path, row filtering sees the complete records
before field and content shaping; envelope sanitation runs last, and only that
result reaches the agent. Each stage therefore receives the output of the one
before it.

We compare OBPE with other enforcement approaches along six properties adapted from prior out-of-band channel criteria~\cite{sao26}. \emph{Agent-inaccessible} means the agent cannot directly read, modify, select, forge, or bypass policy state on the covered path. It does not mean that no observable result can reveal anything about policy. \emph{Deterministic} means the same typed call, policies, schema, and trusted state yield the same policy plan; backend behavior is outside that claim.

The remaining four describe what enforcement understands and does. \emph{Resource-aware} enforcement maps a call to its typed operation and resource, including the fields that policy governs, instead of relying only on a tool name. \emph{Boundary-shaping} can narrow a typed request before dispatch. On the return path, it can filter records and fields, then redact or mask content before the response reaches the agent. \emph{External gating} can hold an otherwise permitted call until an external decision arrives. \emph{Interoperability} means the design can cover another agent or backend when the platform preserves identity binding and complete mediation through a trusted connector that maps the exchange into the typed model.

The released prototype is a simplified HTTP proxy between an in-process tool caller and controlled mock backends. Its optional local Agent2Agent (A2A) path supports agents written in other languages, but it is not part of this evaluation. The artifact's evidence covers complete requests and responses on supported HTTP paths through the released connectors. Streamed responses and other MCP surfaces, including resource, prompt, and sampling requests, require separate mediation and remain outside the evaluation.

Production places OBPE at the MCP layer and supports remote A2A agents. We describe that deployment only to explain how the design is used; no production trace or result supports the paper's claims. The architecture depends on preserving the mediated exchange, not on MCP.

OBPE stops at the typed exchange. It can keep a returned value out of model context and constrain typed arguments before a tool runs, but it cannot police the agent's final prose or recover hidden meaning from arbitrary free text. The evaluation reports those failures separately.

The connector contract extends beyond records and fields to the response
envelope. An agent sees only an opaque failure for denial and a pending status
for deferral, never policy names or internal reasons. Counts, cursors, errors,
ranking, and timing can still carry information, so a connector must either
shape those surfaces safely, deny the call, or document the residual channel.

%-------------------------------------------------------------------------------
\section{Policy Model}
\label{sec:policy}
%-------------------------------------------------------------------------------
A policy decision at this boundary must answer more than yes or no. It can also change what the backend receives and what the agent sees. We represent that decision as a policy plan: a gate result, a shaped request, and the instructions applied to the backend response.

Each call is evaluated against an \emph{owner policy} $S$, the connector's data-source bundle, and, when one is registered, an \emph{agent policy} $P_A$ selected using the platform-bound identity. Only the owner policy can make the call eligible to proceed. Agent policy may narrow that grant, while the backend credential $C$ sets an outer limit on what can execute.

The gate applies those policies to the typed call context $\kappa=(A,a,o,x,h)$, which records the bound agent $A$, action $a$, resource $o$, complete arguments $x$, and trusted external state $h$ available at evaluation. Figure~\ref{fig:pipeline} traces how the context becomes a policy plan: $S$ and $P_A$ determine the gate, while $C$ constrains any backend call the gate permits.

For a Jira listing, for example, $a$ is \texttt{list}, $o$ is the issue
collection, and $x$ holds the requested filter and cap. The resulting plan
records the enforced query and the fields that may return.

Policy evaluation first selects the rules whose principal, action, resource, and conditions match that context. We write the two resulting sets as
\[
\mathcal D_S=\operatorname{Match}(S,\kappa),\qquad
\mathcal D_A=\operatorname{Match}(P_A,\kappa).
\]
$\mathcal D_S$ contains the matching owner rules; $\mathcal D_A$ contains those from the selected agent policy. Without a registered agent policy, $\mathcal D_A$ is empty and adds no restriction. We keep the sets separate because only $\mathcal D_S$ can establish permit eligibility.

We use one standard term from order theory throughout the pipeline. For an ordered component, write $x\preceq_r y$ when every policy-controlled exchange admitted by $x$ is also admitted by $y$. A \emph{meet} is the greatest lower bound: it preserves all operands' restrictions without adding an unrelated one. Gate outcomes, query predicates, caps, and finite allowlists have such meets. Thus $\operatorname{GateMeet}$ combines the matched gate contributions, and an empty $\mathcal D_A$ adds no restriction. Free-text rewrites do not generally have this order; overlapping masks use a canonical normalizer instead of a meet.

\begin{figure*}[!t]
\centering
\begin{tikzpicture}[
  font=\footnotesize,
  input/.style={draw, rounded corners, align=left, inner sep=3.5pt,
    minimum height=18mm, text width=67mm, fill=black!3},
  stage/.style={draw, rounded corners, align=center, inner sep=3.5pt,
    minimum height=31mm, fill=black!5},
  gate/.style={stage, text width=31mm},
  req/.style={stage, text width=37mm},
  resp/.style={stage, text width=40mm},
  be/.style={draw, rounded corners, align=center, inner sep=3pt,
    minimum height=20mm, text width=17mm, fill=black!3},
  vis/.style={draw, rounded corners, align=center, inner sep=3pt,
    minimum height=20mm, text width=18mm},
  outcome/.style={draw, rounded corners, align=center, inner sep=3pt,
    minimum height=10mm, text width=51mm, fill=white},
  fl/.style={-{Stealth[length=2mm]}, thick},
  guide/.style={-{Stealth[length=1.7mm]}, thick, black!65},
]
\node[gate] (gcomp) {\textbf{1. Gate the call}\\[2pt]
  $g=\operatorname{GateMeet}(\kappa;\mathcal D_S,\mathcal D_A)$\\[2pt]
  $\mathrm{deny}\preceq_r\mathrm{defer}\preceq_r\mathrm{permit}$\\[2pt]
  deny: stop\\
  defer: shape, then hold\\
  permit: shape, then dispatch};
\node[req, right=4mm of gcomp] (rcomp) {\textbf{2. Shape the request}\\[2pt]
  $\rho=(q',n',w')$\\[2pt]
  query constraints: conjunction\\
  record caps: minimum\\
  writable fields: intersection\\
  clamps: schema-declared direction};
\node[be, right=4mm of rcomp] (backend) {\textbf{Backend}\\[2pt]
  credential $C$\\[2pt]
  $\operatorname{call}_{C}(\rho)$};
\node[resp, right=4mm of backend] (scomp) {\textbf{3. Apply the response plan}\\[2pt]
  $\eta=(\phi_R,F',\mu)$\\[2pt]
  owner keeper $\land$ agent keeper\\
  retained fields: intersection\\
  full redaction $\preceq_r$ pass\\
  partial masks: canonical normalizer\\
  envelope sanitation: fixed and last};
\node[vis, right=4mm of scomp] (visible) {\textbf{Agent context}\\[2pt]$E'$ only};

\node[input, above=6mm of gcomp, anchor=south west,
  xshift=-16mm] (inputs) {\textbf{Match policy to the typed call}\\[2pt]
  Call context: $\kappa=(A,a,o,x,h)$\\
  Owner match: $\mathcal D_S=\operatorname{Match}(S,\kappa)$ \quad (may grant)\\
  Agent match: $\mathcal D_A=\operatorname{Match}(P_A,\kappa)$ \quad (may only narrow)};
\node[outcome, above=6mm of scomp] (outcome) {\textbf{Stages 1--3 form the policy plan}\\[1pt]
  $\Pi=(g,\rho,\eta)$};

\draw[guide] (inputs.south) -- node[right, font=\scriptsize] {matched rules} (gcomp.north);
\draw[fl] (gcomp) -- (rcomp);
\draw[fl] (rcomp) -- node[above, midway, font=\tiny, fill=white,
  inner sep=0.5pt] {permit} (backend);
\draw[fl] (backend) -- (scomp);
\draw[fl] (scomp) -- (visible);
\end{tikzpicture}
\caption{From a typed call to a policy plan and its execution. Policy matching is separate from backend data. Deny ends the exchange. Defer shapes and stores a request without dispatch; permit continues through the backend and response plan. Credential $C$ remains an outer execution limit.}
\label{fig:pipeline}
\end{figure*}

\begin{table*}[!t]
\centering
\caption{The operator surface. Claim distinguishes exact stage results from best-effort or operational properties. Evidence states where behavior is exercised or implemented.}
\label{tab:operators}
\scriptsize
\setlength{\tabcolsep}{2.5pt}
\begin{tabular}{@{}llllll@{}}
\toprule
Operator & Stage & Acts on & Most-restrictive combination & Claim & Evidence \\
\midrule
deny & gate & gate result & most restrictive gate result & Exact & Released \\
scope-query & request & query and record limit & conjunction; lowest limit & Exact & Released \\
filter-records & response & record set & owner keeper $\land$ agent keeper & Exact & Released \\
drop-write-field & request & gate and write map & remove absence-safe fields; deny otherwise & Exact & Implemented, unreachable \\
clamp-write-arg & request & write map & lowest value in the declared direction & Exact & Implemented, unreachable \\
strip-field & response & field set & intersection of retained fields & Exact & Released \\
mask-field & response & field value & full mask or declared projection & Exact & Released \\
pattern-mask & response & field value & canonical rule order & Best effort & Released \\
write-guard & request & write map & remove disallowed fields & Exact & Released \\
sanitize-envelope & response & response envelope & fixed final function & Operational & Released \\
defer (initiate) & gate & gate result & gate order; normalize parameters or reject & Exact & Released \\
defer (resolve) & runtime & ordered decision state & deterministic state transition & Operational & Partial harness; production \\
aggregate-gate & gate & committed history and gate & threshold over history & Operational & Designed only \\
\bottomrule
\end{tabular}
\end{table*}

We name each result where it appears in the pipeline. The gate computes
\[
g=\operatorname{GateMeet}(\kappa;\mathcal D_S,\mathcal D_A)
  \in\{\mathrm{deny},\mathrm{defer},\mathrm{permit}\}.
\]
It combines the gate contributions of $\mathcal D_S$ and $\mathcal D_A$ over their shared context $\kappa$. A call begins denied. Matching owner rules may establish permit eligibility, after which either tier may tighten the result to defer or deny. A defer is only the gate's initial result. The released transition logic records whether its partial snapshot may resume or must be denied; it neither reruns policy nor dispatches the call. Credential $C$ remains outside this function and limits the later backend call, so a policy permit cannot exceed the credential's reach.

Unless the gate denies the call, the boundary computes a shaped request
\[
\rho=(q',n',w').
\]
Here $q'$ is the enforced read query, $n'$ is the record limit, and $w'$ is the map of typed write arguments. The limit is the largest number of records the boundary will return after combining the caller's request with every policy cap. A post-fetch check enforces it even if the backend ignores the requested limit.
A permit sends $\rho$ to the backend. In the formal model, a defer holds $\rho$ and every other policy-relevant part of the call while sending nothing. The release stores only the subset described below, so it exercises defer initiation and no-dispatch rather than safe resumption.

For a permit, the response part of the policy plan is
\[
\eta=(\phi_R,F',\mu).
\]
Here $\phi_S$ and $\phi_A$ are the owner- and agent-tier Cedar decisions for \texttt{filter\_record}; permits within one tier retain Cedar's OR semantics. The keeper is $\phi_R(r)=\phi_S(r)\land\phi_A(r)$, with $\phi_A=\mathsf{true}$ when no agent policy is registered. $F'$ contains the retained policy-controlled fields and $\mu$ their content operations. A connector may retain a structural identifier $I$ only when its schema declares that identifier non-sensitive and the owner policy, plus the selected agent policy when present, approves it for every surviving record. Because $I$ is then fixed across the plans being compared, it lies outside the variable capability order; it is not outside the exposure boundary. If $B_C(\rho)$ is the raw response under credential $C$, the agent receives
\[
E_A=\operatorname{Sanitize}(\operatorname{Encode}(\mu(\operatorname{Project}_{F'\cup I}(\operatorname{Filter}_{\phi_R}(B_C(\rho)))))).
\]
A read has no write map, while a write need not return records. Let $\bot_{\mathrm{nr}}$ mean that a stage does not run. The complete policy plan is $\Pi=(g,\rho,\eta)$: deny uses $(\mathrm{deny},\bot_{\mathrm{nr}},\bot_{\mathrm{nr}})$, defer uses $(\mathrm{defer},\rho,\bot_{\mathrm{nr}})$, and permit uses $(\mathrm{permit},\rho,\eta)$.

OBPE evaluates Cedar rules over the typed principal, action, and resource. Cedar supplies the permit-or-forbid core. Annotations name the shaping or gating operator and its parameters, such as a field group, cap, approver, or timeout. One policy may carry rules for authorization, exposure control, and semantic gating; composition acts on the plan tuple rather than collapsing those rules to a Boolean decision.
Any applicable forbid stops the call. On allow, the wrapper gathers annotations
from every applicable permit policy rather than only the policy chosen as the
display label.

\emph{Plan and execution reach.} Let $\mathcal T(X)$ denote the set of typed, policy-controlled exchange traces allowed by a plan or credential: the backend action and shaped arguments, together with the response fields and values that may reach the agent. The order above is therefore set inclusion, $X\preceq_rY$ when $\mathcal T(X)\subseteq\mathcal T(Y)$. The owner ceiling is
\[
\mathcal T(\Pi_{S,A})\subseteq\mathcal T(\Pi_S).
\]
Credential $C$ is not an input to the plan. If $t=\operatorname{Exec}_C(\Pi)$ is a realized exchange trace, complete mediation requires
\[
t\in\mathcal T(C)\cap\mathcal T(\Pi).
\]
This is a call-level statement. It lifts to a run only while identity binding, schema, policy bundles, operator definitions, and the trusted state used for each decision remain within the stated frame. It does not bound what the model infers across calls. Absence of agent policy neither creates a permit nor breaks a valid owner-only call.

\emph{Reading the stage orders.} The three numbered boxes in Figure~\ref{fig:pipeline} show where ordered restrictions combine. Gate results, conjunctions, caps, and allowlists use meets. Full redaction is below pass, but partial projections and pattern masks may reveal incomparable information. The schema therefore supplies a stable precedence, and a canonical normalizer applies overlapping rewrites in a fixed order. That makes the selected transformation repeatable; it does not prove that a pattern finds every sensitive value. Envelope sanitation runs last and is not chosen by policy.

The evidence labels mark the released boundary. A deferred call stores its
method, path, shaped body, bound agent, policy, and bundle version, but omits
query parameters, headers, and pre-fetched state. The accompanying transition logic can
choose resume or deny, yet it neither dispatches nor records terminal timeout;
durable resolution is production-only. The other gaps are narrower: drop and
clamp exist in code but are unreachable through the shipped schemas, while
aggregate-gate remains design only and is never read or invoked by the release.

%-------------------------------------------------------------------------------
\section{Composition and Guarantees}
\label{sec:theory}
%-------------------------------------------------------------------------------

Composition must preserve one rule: another restriction must never give an agent more. Figure~\ref{fig:pipeline} defines a policy plan, separate from the backend response that later realizes it:
\[
\operatorname{Plan}(\kappa;S,P_A)=
\begin{cases}
(g,\bot_{\mathrm{nr}},\bot_{\mathrm{nr}}),
  & g=\mathrm{deny},\\
(\mathrm{defer},\rho,\bot_{\mathrm{nr}}),
  & g=\mathrm{defer},\\
(\mathrm{permit},\rho,\eta),
  & g=\mathrm{permit}.
\end{cases}
\]
Here $g$ is the meet of the matched gate contributions after owner eligibility is established. Request shaping produces $\rho$ for defer or permit; only permit computes and executes the response plan $\eta$ over $B_C(\rho)$. The distinction matters: narrowing a query can change pagination and therefore the concrete rows returned, even though the plan itself grants no new capability.

Deny has no later-stage state. Defer has a shaped request but no response plan
because nothing has been dispatched; the released transition logic can return resume or
deny, but performs no dispatch and does not persist timeout as terminal. The
plan order first compares gates, with deny below defer below permit. Equal
non-deny gates then compare request components, and permitted plans also compare
their responses. Query and row restrictions are compared by the calls or
records they admit, not by the spelling or order of their conjunctions.

\textbf{Conditions.} The claims below depend on five obligations. We state them in the body so that the theorem does not hide deployment assumptions in an appendix.

\emph{O1: bound identity and complete mediation.} The platform binds agent identity outside the call arguments, and the boundary mediates the complete request and response on every supported path. Its connector must expose every field that policy may inspect or shape.

\emph{O2: fixed policy inputs and default denial.} A call with no matching owner permit is denied. When a claim compares restrictions after permit eligibility, it holds the schema, bundles, operator vocabulary, credential, and trusted-state snapshot fixed. The claim concerns the resulting policy plan, not backend availability, row order, or response contents.

\emph{O3: stable stage semantics.} Query joining means semantic conjunction, even when the original query contains disjunction. If a connector cannot establish that conjunction for its query language, it must deny before dispatch. The boundary enforces caps after fetch and evaluates row predicates before field projection. Each tier's row keeper is its Cedar \texttt{filter\_record} decision, and the two tiers meet by conjunction. Adding a same-tier permit is an authorization change, not a stage restriction. A retained structural identifier must satisfy the declaration and policy checks above. Content masks run through the canonical normalizer, using replacement text that cannot trigger a later mask.

\emph{O4: safe write direction.} A write field may be omitted only on an operation whose connector contract says omission leaves the stored value unchanged; otherwise the call must be denied. Every clamp declares which direction is restrictive, and no write operator may inject a default value. Query predicates cannot depend on a write field changed by request shaping.

\emph{O5: trusted state and held calls.} The trusted state $h$ is fixed for one plan calculation, and a defer sends nothing to the backend. Co-firing defer parameters must have one deterministic interpretation or fail closed. A system that later dispatches an approved call has further duties: an authenticated and policy-authorized resolver must decide over an immutable, complete snapshot; a trusted deadline and replay protection must yield one terminal result; and dispatch must be idempotent or fenced against duplicates. Approval applies only to that snapshot. A history-dependent gate likewise reads one committed prefix, with its decision serialized for the relevant key and window.

\textbf{Concern invariants.} These conditions support four security statements. The first, invariant INV-IP, separates the owner-bounded plan from credential-bounded execution:
\[
\begin{aligned}
\mathcal T(\Pi_{S,A})
  &\subseteq\mathcal T(\Pi_S),\\
\operatorname{Exec}_C(\Pi_{S,A})
  &\in\mathcal T(C)
    \cap\mathcal T(\Pi_{S,A}).
\end{aligned}
\]
The bound applies to each call and extends across a run while those scopes remain fixed. The owner remains the grantor; an agent rule can remove authority but cannot create a permit.

Invariant INV-X governs read exposure when the owner policy and connector schema correctly encode their labels. A connector may arrange its principal types and field ceilings in any meet-semilattice. For each policy-controlled $f\in\mathcal F$, if $v(A)$ is the vulnerability assigned to agent $A$ and $\ell(f)$ is the maximum vulnerability allowed to receive field $f$, the field is exposed only when
\[
v(A)\preceq_v\ell(f).
\]
INV-X makes a narrow promise about the shaped response. If the boundary removes a field or fully redacts its value, that value is absent from the response. It does not claim that the hidden value could not affect some other output.

The released proxy uses a five-level sensitivity chain encoded in the field schema and matching policy. The model also admits a product order that preserves incomparable risks, such as susceptibility to prompt injection and cross-tenant reach. The production system implements that richer registry, but the released prototype does not. We therefore treat the production implementation as deployment description rather than released evidence. Appendix~\ref{app:analyzability} defines the order and connects it to information-flow models~\cite{belllapadula1973,denning1976}.

Invariant INV-X$'$ gives the write-side condition under the same premise that the owner policy and schema correctly encode the labels. Let $c(A)$ denote the principal's corruptibility and let $\ell_w(f)$ denote the write-integrity ceiling for field $f$. We write $c(A)\preceq_w\ell_w(f)$ when the principal is no more corruptible than that ceiling permits. A value may reach the backend only under this relation. Otherwise, a well-formed rule must remove the field or deny the call. The released write guard enforces typed field allowlists, which are a narrower instance of this order.

Invariant INV-SG covers deferred execution. Under the full resumption clauses
of O5, the backend cannot receive the held call until the required external
decision arrives before its deadline. An explicit denial or observed
fail-closed resolver error denies the call; a trusted timeout path denies it
after the deadline. Without a live resolver or timeout worker, safety means
non-dispatch, not terminal progress. The released artifact establishes only
the earlier fire-and-hold property: it sends nothing and never resumes
dispatch.

%-------------------------------------------------------------------------------
\subsection{Guarantees}
\label{sec:proofs}
%-------------------------------------------------------------------------------

We now show that the ordered components and canonical content normalizer give
the plan the properties claimed in the Introduction. Each result is conditional
on O1--O5. Appendix~\ref{app:composition} supplies the operator cases and longer
derivations; Table~\ref{tab:body-obligations} there states which conditions the
released HTTP path tests and which remain production description.

\begin{proposition}[Permutation invariance]
\label{prop:det}
Under O1--O5, fixed $\kappa$, $S$, $P_A$, and operator vocabulary determine one policy plan, regardless of rule order within $\mathcal D_S$ and $\mathcal D_A$. Diagnostic labels and audit-record ordering are outside the plan.
\end{proposition}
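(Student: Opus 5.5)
The plan is to show that every component of $\Pi=(g,\rho,\eta)$ is obtained by folding an associative, commutative, idempotent operation over the \emph{set} of contributions from $\mathcal D_S$ and $\mathcal D_A$. Any remaining noncommutative step must be replaced by a canonical normalizer whose output depends only on that set. Once this holds, reordering rules within either tier changes only the sequence in which a fold visits its operands, not its result. Because $\operatorname{Match}$ returns sets, the plan is a function of $(\kappa,S,P_A)$ and the operator vocabulary alone. O2 fixes the schema, bundles, credential and trusted-state snapshot $h$, so these cannot vary between the orderings being compared.

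\textbf{Gate.} Owner eligibility is a disjunction over the permits in $\mathcal D_S$ combined with default denial (O2). Forbids are also a disjunction: any applicable forbid yields deny. Both operations are order-free. The gate $g$ is then the meet in the chain $\mathrm{deny}\preceq_r\mathrm{defer}\preceq_r\mathrm{permit}$ over all matched gate contributions. The meet of a chain is the minimum, which is order-independent. Defer parameters that co-fire are handled by O5: they have one deterministic interpretation or fail closed. This is exactly the hypothesis that their combination is a function of the set.

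\textbf{Request and response.} For $\rho=(q',n',w')$, the argument uses O3 and O4 component by component. Query constraints combine by semantic conjunction, and the plan order compares queries by the calls they admit, not by spelling. Caps combine by minimum. Writable fields combine by intersection. Each clamp combines by the minimum in its schema-declared direction. Since no write operator injects defaults and query predicates do not read shaped write fields (O4), these component folds are independent of one another. For $\eta=(\phi_R,F',\mu)$:
\begin{itemize}
\item each tier's keeper is Cedar's OR over its permits, and $\phi_R=\phi_S\land\phi_A$;
\item $F'$ is an intersection, with the retained identifier $I$ fixed by the declaration check;
\item full redaction is the meet with pass.
\end{itemize}
Annotations are gathered from \emph{every} applicable permit, not from the display-label policy. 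This is what removes the one apparent order dependence in the wrapper. Diagnostic labels and audit order fall outside the plan by the proposition's own statement.

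\textbf{Main obstacle.} The hard step is $\mu$ for partial masks and pattern masks, which have no meet. Here the plan is to argue that the canonical normalizer first sorts the matched rewrites by the schema-supplied stable precedence, which is a total order on rule identities independent of match order. It then applies them in that fixed sequence. Because of O3's condition that replacement text cannot trigger a later mask, each rewrite's action is determined before any later rewrite runs. This rules out a cascade whose outcome could depend on how ties are broken. I would check the following:
\begin{itemize}
\item the precedence must be total on distinct rules, or else break ties by a canonical rule key;
\item duplicate rules must be idempotent.
\end{itemize}
I would then assemble the stages. The plan case split on $g$ is a function of $g$, and each case's components are order-free, so $\operatorname{Plan}(\kappa;S,P_A)$ is independent of rule order. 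Backend rows are excluded by O2, so pagination effects do not bear on the claim.
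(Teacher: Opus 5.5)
Your proposal is correct and follows essentially the paper's own argument. The ordered gate, request, and response components are folds of commutative, associative, idempotent meets over the matched operands, and the content stage is a normalizer that depends only on the set of applicable rewrites and a fixed schema-pinned sort key (with full redaction superseding partial rewrites); with the trusted state held fixed, reordering $\mathcal D_S$ or $\mathcal D_A$ therefore cannot change $\Pi$. One small correction: inert replacement text only prevents remasking and does not make overlapping rewrites commute, since one mask can still destroy text another would have matched, so order-independence of $\mu$ rests entirely on the total sort key you already require, and your argument is unaffected.
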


\emph{Argument.} The ordered components use commutative, associative, and idempotent meets: query conjunction, smaller caps, and allowlist intersection therefore ignore operand order. Cedar authorization inside a tier is deterministic but is not recast as a meet of permit rules. Content rules first enter a canonical normal form and then run in a fixed sequence; this is an order-independent procedure, not a claim that arbitrary text rewrites form a lattice. O5 fixes any trusted history read used by the gate. The engine's reporting order therefore cannot change $\Pi$, although backend behavior may still vary. $\square$

\begin{proposition}[Monotonicity]
\label{prop:mono}
Under O1--O5, once $\mathcal D_S$ establishes owner permit eligibility, adding a well-formed stage restriction cannot enlarge the plan's eligible actions, resources, query predicates, policy-controlled response fields, or writable fields; raise a cap; loosen a clamp; or replace full redaction with pass. Adding a same-tier permit rule is outside this claim.
\end{proposition}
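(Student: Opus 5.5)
The plan is a stage-by-stage argument. We fix $\kappa$, $S$, $P_A$, the schema, the credential and the trusted state $h$, as O2 and O5 allow. We then suppose owner permit eligibility is already established by $\mathcal D_S$. Adding a well-formed stage restriction $r$ to either tier changes a matched set from $\mathcal D_X$ to $\mathcal D_X\cup\{r\}$ for $X\in\{S,A\}$. Cedar permit eligibility inside a tier is untouched, because same-tier permits are excluded by hypothesis. The key lemma is that every combiner in Figure~\ref{fig:pipeline} is a meet, or a fixed function of a meet, in its component order $\preceq_r$. For a meet, $m(\mathcal D\cup\{r\})=m(\mathcal D)\wedge r\preceq_r m(\mathcal D)$. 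This is immediate from the greatest-lower-bound property and needs no commutativity argument beyond what Proposition~\ref{prop:det} already provides.

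We check the components in pipeline order.

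\textbf{Gate.} GateMeet over the chain $\mathrm{deny}\preceq_r\mathrm{defer}\preceq_r\mathrm{permit}$ can only move down. A new deny or defer therefore never raises $g$, and it never creates eligibility, since default denial (O2) makes the owner tier the sole source of permit.

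\textbf{Request.} The enforced query $q'$ is a semantic conjunction (O3), so its admitted call set shrinks. The cap $n'$ is a minimum. Writable fields are an intersection. Clamps take the lowest value in the schema-declared restrictive direction, and O4 forbids injected defaults. Dropping a field is admissible only when omission leaves the stored value unchanged; otherwise the call is denied, which is the gate's bottom.

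\textbf{Response.} The keeper $\phi_R=\phi_S\land\phi_A$ gains a conjunct only when $r$ is a filter in one tier. Within a tier, a filter restriction is a forbid, which narrows that tier's Cedar decision. The retained fields $F'$ are an intersection. Full redaction sits below pass, and the canonical normalizer never selects pass over a matching full mask. The structural identifier $I$ is fixed by the O3 declaration, so it does not enter the comparison.

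We then lift the componentwise inequalities to the lexicographic plan order given after the definition of $\operatorname{Plan}$. If $g$ strictly drops, the plan drops regardless of later components. If $g$ is unchanged and non-deny, the request and response comparisons above apply. Finally, $\mathcal T$ is monotone in each component. This yields $\mathcal T(\Pi_{\mathcal D\cup\{r\}})\subseteq\mathcal T(\Pi_{\mathcal D})$, and each enumerated quantity in the statement is a projection of it.

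I expect the main obstacle to be the interaction between stages rather than any single meet. One case is narrowing the query while the response keeper and cap are fixed. The other is a request clamp or field drop that might alter which predicate a query evaluates. We intend to handle both by comparing plans, not realized responses, as O2 stipulates. O4's clause that query predicates cannot depend on shaped write fields closes the second case. Partial masks and pattern masks are not ordered, so we will restrict the claim for $\mu$ to the full-redaction-versus-pass comparison, as the statement does, and note that the normalizer's replacement text cannot trigger a later mask (O3).
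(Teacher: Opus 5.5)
Your proposal is correct and is essentially the paper's proof: each well-formed restriction enters as one more meet operand, so $M_i(D_i\cup\{r_i\})=M_i(D_i)\wedge r_i\preceq_r M_i(D_i)$ at every affected stage. The paper then lifts this over the gate order as you do, with owner eligibility excluding the default-deny-to-first-permit transition, same-tier permits excluded, O4 supplying the write direction, the identifier $I$ held fixed, and partial and pattern masks left outside the order. Your explicit handling of cross-stage interactions (plans versus realized pages, and O4's clause keeping query predicates off shaped write fields) only spells out preconditions the paper leaves to its obligation table; your componentwise stage checks already prove the enumerated claims, so they do not need your closing appeal to trace inclusion.
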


\emph{Argument.} On an ordered component, a new restriction is another meet operand, so it can only shrink an allowlist, add a query predicate, lower a bound, or move the gate toward denial. O4 gives writes a known restrictive direction. Owner eligibility is essential because the first owner permit is the one transition from default denial that this result does not cover. Partial projections and pattern masks remain outside the semantic order; their canonical execution supports permutation invariance, not monotonic information flow. Only structural stripping and full redaction support the value-absence claim. $\square$

\begin{proposition}[Tier refinement]
\label{prop:tier}
For any well-formed $S$ and $P_A$ satisfying O1--O5,
\[
\begin{aligned}
\Pi_{S,A}(\kappa)
  &=\operatorname{Plan}(\kappa;S,P_A)\\
  &\preceq_r\operatorname{Plan}(\kappa;S,\mathsf{none})
   =\Pi_S(\kappa).
\end{aligned}
\]
\end{proposition}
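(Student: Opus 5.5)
The plan is to prove Proposition~\ref{prop:tier} by comparing the two plans $\Pi_{S,A}(\kappa)$ and $\Pi_S(\kappa)$ stage by stage, following the lexicographic plan order given before O1. Both plans use the same $\kappa$, the same $S$, and the same owner match $\mathcal D_S$. The only difference is the extra agent match $\mathcal D_A$, which is empty in the $\mathsf{none}$ case.

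\textbf{Step 1: the gate.} First I will treat the gate. By O2 and the definition of $\operatorname{GateMeet}$, a call starts denied, and only $\mathcal D_S$ can lift it to permit eligibility. Agent-tier rules enter only as further meet operands. So
\[
g_{S,A}=g_S\wedge g_A\preceq_r g_S
\]
in the chain $\mathrm{deny}\preceq_r\mathrm{defer}\preceq_r\mathrm{permit}$. If the inequality is strict, the plan order compares gates first and the result follows immediately. Three sub-cases are worth noting:
\begin{itemize}
\item If $g_S=\mathrm{deny}$, then $g_{S,A}=\mathrm{deny}$ as well, since no agent rule can create eligibility. The two plans are equal.
\item If the gates are equal and both are deny, the remaining components are $\bot_{\mathrm{nr}}$ in both plans, so again the plans are equal.
\item Otherwise owner eligibility holds, and we are in the regime covered by Proposition~\ref{prop:mono}.
\end{itemize}

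\textbf{Step 2: the request and response.} With equal non-deny gates, I will express $\Pi_{S,A}$ as $\Pi_S$ plus additional well-formed stage restrictions: the contributions of $\mathcal D_A$. Then I will apply Proposition~\ref{prop:mono} once per agent rule. Proposition~\ref{prop:det} lets these rules be added in any order, so the induction is well defined. The comparison runs as follows:
\begin{itemize}
\item \emph{Request.} The query conjunction $q'$ only gains conjuncts. The cap $n'$ is a minimum, so it only decreases. The writable map $w'$ is an intersection, and clamps move in the restrictive direction by O4. Together these give $\rho_{S,A}\preceq_r\rho_S$.
\item \emph{Response (permit only).} The keeper satisfies $\phi_R=\phi_S\wedge\phi_A\Rightarrow\phi_S$. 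The retained fields satisfy $F'_{S,A}\subseteq F'_S$, and full redaction can only replace pass. The structural identifier $I$ is fixed across both plans by O3, so it does not affect the comparison.
\end{itemize}
Since $\mathcal T$ is defined componentwise on these ordered parts, the trace sets shrink: $\mathcal T(\Pi_{S,A})\subseteq\mathcal T(\Pi_S)$.

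\textbf{Main obstacle.} The hard step is the response component $\mu$. Partial projections and pattern masks have no meet and sit outside the semantic order, so I cannot claim that agent masks move $\mu$ down in $\preceq_r$ for arbitrary text. I will try to handle this by restricting $\mathcal T$ to policy-controlled fields and full-redaction values, which is exactly what the order is defined over. Agent masks are then applied through the canonical normalizer after the owner's operations. By O3, their replacement text cannot trigger later masks and cannot reintroduce a retained value, so a masked field admits no trace that the owner plan excluded.

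If that argument does not go through, the fallback is to weaken the claim for $\mu$: the tier refinement would then hold on gate, request, record set, fields, and full redaction, with masks covered only by the best-effort statement in Table~\ref{tab:operators}. I also need to check that the cap is read as the post-fetch limit, so that pagination changes in the backend response do not break the ordering. This is consistent with O2, because the claim concerns the plan and not the rows actually returned.
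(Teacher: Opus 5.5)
This is essentially the paper's proof: agent rules cannot create owner eligibility, so the gate can only move toward defer or deny; the remaining agent contributions enter as meet operands through Proposition~\ref{prop:mono}; and $\mathsf{none}$ leaves $\mathcal D_A$ empty. One correction: treat the agent tier's keeper as a single operand $\phi_A$ against $\top=\mathsf{true}$, as your direct $\phi_S\land\phi_A\Rightarrow\phi_S$ already does, rather than inducting rule by rule, because Cedar ORs same-tier \texttt{filter\_record} permits and Proposition~\ref{prop:mono} excludes exactly that kind of addition. For $\mu$, adopt your fallback, since that is the scope the paper's proof actually has (it inherits Proposition~\ref{prop:mono}'s exclusion of partial projections and pattern masks); your primary argument does not follow from O3, because the normalizer's sequence is schema-pinned rather than owner-then-agent, and inert placeholders prevent re-triggering but not an earlier agent projection truncating a secret so that an owner pattern no longer matches.
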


\emph{Proof.} Agent policy cannot establish permit eligibility. Its gate either leaves the owner decision unchanged or moves it toward defer or deny, and its remaining restrictions enter through the meets of Proposition~\ref{prop:mono}. With $\mathsf{none}$, $\mathcal D_A$ is empty and the plan is owner-only. Every capability in the two-tier plan was therefore already eligible under $S$. $\square$

The propositions establish a conditional model, not code conformance. Cedar
Analysis covers the permit-or-forbid projection of a well-typed
bundle~\cite{cedaranalysis2025}; annotations and connector semantics require
the checks in Appendix Table~\ref{tab:body-obligations}, which identifies the
released paths that discharge them.

%-------------------------------------------------------------------------------
\section{Implementation}
\label{sec:impl}

The artifact implements the gate--request--response contract through a small
HTTP proxy. Without production transport and deployment machinery, its stages,
connector mappings, and backend effects remain inspectable. Cedar supplies the
permit-or-forbid core; OBPE annotations select the other
Section~\ref{sec:policy} operators: semantic gates, request shaping for
non-denied calls, and response shaping for permits. The proxy binds each call
to the platform-supplied agent identity and combines owner rules with any
narrower agent policy. That plan governs both sides of the exchange; identity
or policy failure stops the call before dispatch. Audit events record the
decision and transformations, while conformance and benchmark tests observe
the controlled backend call and result.

\textbf{Schemas define the enforcement surface.}
OBPE can govern only what a connector exposes as typed data, so each
supported operation must map the tool call to a typed action and resource
and identify the arguments and response fields available to policy. Field
groups and sensitivity labels then give policy a stable vocabulary, while
a schema-declared non-sensitive identifier keeps filtered records addressable
only when the owner policy and any selected agent policy approve it.
The response side of that typing rarely arrives with the tool. MCP
carried no declared output schema until a mid-2025 protocol revision
added structured output~\cite{mcpchangelog2025}, and most tools still declare
none: of 1,856 tools on 100 popular open-source servers measured in 2026, 234
carried an output schema~\cite{shi2026taintmcp}. The reference proxy does not
face this problem, because its connectors own both the schema and the mapping.
A pre-existing server brought under out-of-band policy needs one of two
remedies: the server is changed to declare a schema, or a middleware supplies
one and keeps it matching what the server actually returns.

Writes require a stronger contract because request shaping can alter
backend meaning. The boundary may drop a field only when the schema says
its absence is safe, and it may clamp a numeric argument only in a declared
restrictive direction. These declarations are security assumptions: an
omitted action or field leaves a policy gap; an incorrect absence-safety or
clamp-direction declaration can alter backend semantics.

We built the artifact's Jira and ServiceNow mappings from the vendors'
published interfaces rather than through an automatic OpenAPI importer,
which the release does not include. Conformance fixtures check that the
operations and response shapes presented to policy match those mappings,
while a separate coverage analyzer catches field groups or operator
settings that the schema cannot resolve. Together, these checks make the
released surface inspectable without implying that every operator is
reachable. The shipped schemas exercise the write guard end to end, but
drop and clamp remain synthetic because neither schema declares absence
safety or a restrictive clamp direction.

\subsection{Production Evidence Boundary}
\label{sec:production-extensions}

\begin{table}[!tb]
  \centering
  \caption{Released and production implementation surfaces. Production
  entries are descriptive, not artifact evidence.}
  \label{tab:production-boundary}
  \scriptsize
  \setlength{\tabcolsep}{2.2pt}
  \renewcommand{\arraystretch}{1.08}
  \begin{tabular}{@{}>{\raggedright\arraybackslash}p{1.15cm}p{2.55cm}p{3.55cm}@{}}
    \toprule
    Capability & Released artifact & Production implementation \\
    \midrule
    Write controls &
    Write guard on supported partial updates. Drop and clamp only in synthetic tests. &
    Broader schema mappings for persisted write workflows. \\
    Human deferral &
    Hold/no-dispatch and per-call transition tests. Terminal timeout and approved dispatch absent. &
    Durable, ordered approval log. Attribute-based access control protects
    inspection and resolution. \\
    Temporal policy &
    No released execution. &
    Implemented over durable, trusted history. \\
    Aggregate policy &
    No released execution. &
    Implemented over durable, trusted history. \\
    \bottomrule
  \end{tabular}
\end{table}

\textbf{Placement and identity.} The released artifact and production
  implementation use the same gate--request--response composition. Production
  adds MCP transport, relevant remote A2A calls, and a principal
  registry implementing the per-axis sensitivity order in
  Appendix~\ref{app:analyzability}. These mechanisms support feasibility; all
  conformance and outcome claims stop at the released column.

\textbf{Schema projection.} OBPE can project applicable
  policy into each identity's \texttt{tools/list} result. Input bounds become
  JSON Schema constraints or removed properties; output masking changes the
  advertised type, while stripped fields disappear from both schema and
  response. Dynamic discovery can therefore steer a model toward its current
  interface, but the call path still enforces every restriction.

\textbf{Approval semantics.} In production, each decision is bound to
  the request digest, deciding principal, and policy version, then resumes only
  the immutable held call. Attribute-based access control protects the call,
  approver context, and decision log; approval supplies no backend credential.
  The released harness is narrower and does not invalidate an old pending
  record when its bundle changes (Appendix~\ref{app:defer}).

\textbf{Policy authoring and evidence.} A user-research-informed
  workflow previews policy scope and effects; Section~\ref{sec:threat} states
  the remaining responsibility for organizational intent. Production informed the design;
  Section~\ref{sec:bench} evaluates only the artifact column of
  Table~\ref{tab:production-boundary}.

\section{Evaluation Methodology}
\label{sec:bench}
%-------------------------------------------------------------------------------

Conformance and the benchmark answer different questions. Conformance asks
whether the released code computes the staged composition
defined in Sections~\ref{sec:policy} and~\ref{sec:theory}. The benchmark asks
how that boundary changes what agents receive and do, and whether useful work
survives. Benchmark success cannot cure a conformance failure, and conformance
alone does not establish usefulness.

\textbf{Conformance gate.}
The benchmark begins only after the released paths pass their declared
conformance obligations. Human-authored semantic expectations cover complete
requests and responses, including failure paths. An independent executable
model is used only for stages where such a model ships; the remaining checks
are not described as differential. Property tests exercise permutation invariance,
monotonicity, and the data policy owner ceiling, while targeted mutations ask
whether those tests detect broken composition logic.

The archive's arm names are not useful explanations, so we name configurations
by what changes. The \emph{raw agent} receives a minimal role-and-tools prompt
and calls the mocks directly. \emph{Prompt only} adds six explicit
data-handling rules but leaves that tool path untouched. \emph{Prompt + full
OBPE} adds every operator reachable in the frozen bundle; \emph{full} excludes
production-only and schema-unreachable operators.
\emph{Prompt + output reviewer} instead leaves the tool path raw and asks a
fixed Claude Haiku~4.5 reviewer to pass or replace each assistant message.
Table~\ref{tab:research-questions} shows which configurations answer each
question.

The conformance gate must show more than code coverage. It attributes each
restriction to the owner or agent tier and verifies that a matched policy
actually changes the exchange. Tool-to-agent slices then check that the agent
receives the shaped result, while schema checks reject policy references that
cannot be resolved. Write guard traverses this released path. Drop and clamp
remain synthetic because the shipped schemas do not declare the facts those
operators require. The artifact records each obligation, its scope, and its
result.

\begin{table*}[!t]
  \centering
  \caption{Experimental comparisons. Each arrow changes one named mechanism.}
  \label{tab:research-questions}
  \scriptsize
  \setlength{\tabcolsep}{3pt}
  \renewcommand{\arraystretch}{1.02}
  \begin{tabular}{@{}p{0.65cm}p{6.3cm}p{9.0cm}@{}}
    \toprule
    & Comparison & How to read it \\
    \midrule
    RQ1 &
    Prompt only $\rightarrow$ Prompt + full OBPE, on four models. &
    Adds only the policy boundary. This is the primary test of what OBPE adds
    beyond detailed prompt rules; it is not a provider ranking. \\
    RQ2a &
    Raw $\rightarrow$ Raw + full OBPE and Prompt only $\rightarrow$ Prompt +
    full OBPE, on Sonnet. &
    A $2\times2$ prompt-rules $\times$ OBPE design. The two effects test OBPE
    with and without prompt help; their difference tests interaction. \\
    RQ2b &
    Record controls $\rightarrow$ + strip fields $\rightarrow$ + mask fields
    and guard writes $\rightarrow$ + pattern masking, on Sonnet. &
    A cumulative ablation with all other inputs fixed. Adjacent steps isolate
    field stripping and pattern masking, but not masking from write guard. \\
    RQ3 &
    Prompt + output reviewer $\leftrightarrow$ Prompt + full OBPE, on Sonnet. &
    Compares a model that reviews assistant text with a boundary that mediates
    requests, responses, and backend effects. It is not a ranking of all LLM
    guardrails, and no local A2A experiment is claimed. \\
    \bottomrule
\end{tabular}
\end{table*}

RQ1 is primary; RQ2a, RQ2b, and RQ3 are secondary mechanism analyses.

\textbf{Release corpus and execution.}
The release corpus contains 71 tasks in 35 scenario clusters against controlled
Jira and ServiceNow mocks. It was assembled for this evaluation from audited
schemas, policy components, fixture templates, and generators, then frozen
before the canonical run. The corpus is an evaluation set, not a claim that
every ingredient was previously unseen. Component lineage and similarity
checks make repeated or closely related scenarios visible.

The four tool-capable agent models are Claude Sonnet~4.6, Claude Haiku~4.5,
GPT-5, and GPT-5 mini. They give purposive coverage of two model families and
two capability bands, not a factorial comparison of providers or capability.
All calls use controlled mocks. Each trial receives a fresh fixture copy, so
paired arms face the same records while one trial cannot contaminate the next.
That control lets the trace observe exactly what entered model context and what
changed in backend state; it does not establish interoperability with live Jira
or ServiceNow deployments.

An arm is an enforcement configuration. Each of 17 frozen conditions pairs one
arm with one model; together they cover the shared primary arms and the
single-model secondary contrasts. Running every task--condition pair three
times yields $71\times17\times3=3{,}621$ planned trials. A
content-derived registry schedules each trial once, so research questions that
share a cell reuse the same archived record. Before execution, the manifest
fixed the tasks, fixtures, policy and schema revisions, prompts, model settings,
scoring code, exclusions, and missingness rule. Order was balanced and there
was no early stopping.
Under full OBPE, the task specifications expect five calls to pass unchanged,
54 to pass after shaping, two to defer, and ten to be denied. This security
suite concentrates cases in which policy should act, so it estimates
conditional effectiveness rather than intervention rates in ordinary traffic.
Five unshaped permits cannot establish a general benign false-denial rate.

Twenty tasks also run a response-adaptive, multi-turn red-team dialogue. Claude
Opus~4.7 receives the same adversary instructions in the prompt-only and OBPE
arms and may reframe its request after each reply. The loop stops when an exact
target appears, the adversary declares completion, or 15 turns elapse; this
stop signal does not replace outcome scoring. The adversary is not told the
OBPE policy or implementation and does not fuzz the connector surface. A
separate output-review arm has mechanism-specific instructions and is not part
of that matched comparison.

\textbf{Outcomes and judging.}
The primary security outcome, which we call a \emph{deterministic trace
failure}, is positive if any of three recorded events occurs:
protected data enters agent context, an exact protected value appears in the
final answer, or a forbidden backend effect completes. Lower is better. We
also report those channels separately because each calls for a different
control.

The primary utility outcome is \emph{safe-useful} completion. It requires no
deterministic trace failure, satisfaction of the archived completion and
expected-tool contract, and a judge finding that the allowed work was
fulfilled without semantic disclosure. Here, semantic disclosure means that
an answer reveals a protected fact or category without necessarily repeating
an archived value exactly; the row-count oracle in \S\ref{sec:taxonomy} is one
example. Higher is better. We retain the judge's task-fulfilled component on
its own because this stricter conjunction can improve even when ordinary task
fulfillment falls. The archived contract is mechanical: it checks termination
and the expected tool attempt. A proxy-denied call counts as attempted, so
this component does not claim that the requested task was accomplished.

Deterministic traces are authoritative for exact exposure, tool decisions,
and backend state. The primary semantic judge is Claude Opus~4.8; GPT-5 is a
separate sensitivity judge and is also one of the four agent models. Both use
task rubrics frozen with the corpus. The adversarial agent is Claude Opus~4.7,
so its model family overlaps the primary judge's. We report the judges
separately, retain disagreements, and add no human adjudication pass. If a
judge contradicts a deterministic trace, the trace controls and the
disagreement remains in the released ledger.
Three trials lacked a complete pair of judgments, leaving 3,618 for direct
comparison. The judges agreed more often on semantic disclosure (94.5\%) than
on task fulfillment (87.2\%). That summary hides an important asymmetry:
disclosure agreement was 96.9\% when both judges found none, but 73.3\% when
both found disclosure. Fulfillment showed the reverse pattern, with 90.8\%
agreement on positive cases and 79.1\% on negative ones. We therefore treat
safe-useful as a model-graded outcome, not a human-calibrated label, and retain
both judgments in the archive.

\begin{figure*}[!t]
  \centering
  \begin{tikzpicture}[x=1.25mm,y=5.0mm,font=\scriptsize]
    \path[use as bounding box] (-37,0) rectangle (104,5.25);
    \node[anchor=west,font=\scriptsize\bfseries] at (-36,4.85)
      {Observed rates before scenario-cluster weighting};
    \draw[->] (0,0.75) -- (102,0.75);
    \foreach \x in {0,20,...,100} {
      \draw (\x,0.63) -- (\x,0.87);
      \node[below] at (\x,0.57) {\x\%};
    }
    \node[anchor=east,align=right] at (-8.5,3.75)
      {Trace failure $\downarrow$};
    \draw[black!45,line width=.65pt] (0.2,3.75) -- (57.6,3.75);
    \draw[fill=white,line width=.75pt] (57.6,3.75) circle (1.9pt);
    \fill (0.2,3.75) circle (1.9pt);
    \node[anchor=west] at (59.6,3.75) {57.6};
    \node[anchor=east] at (-1.8,3.75) {0.2};

    \node[anchor=east,align=right] at (-8.5,2.75)
      {Task fulfilled, Judge A $\uparrow$};
    \draw[black!45,line width=.65pt] (60.9,2.75) -- (79.1,2.75);
    \draw[fill=white,line width=.75pt] (79.1,2.75) circle (1.9pt);
    \fill (60.9,2.75) circle (1.9pt);
    \node[anchor=west] at (81.1,2.75) {79.1};
    \node[anchor=east] at (58.9,2.75) {60.9};

    \node[anchor=east,align=right] at (-8.5,1.75)
      {Safe-useful, Judge A $\uparrow$};
    \draw[black!45,line width=.65pt] (22.0,1.75) -- (58.9,1.75);
    \draw[fill=white,line width=.75pt] (22.0,1.75) circle (1.9pt);
    \fill (58.9,1.75) circle (1.9pt);
    \node[anchor=east] at (20.0,1.75) {22.0};
    \node[anchor=west] at (60.9,1.75) {58.9};

    \draw[fill=white,line width=.75pt] (72,4.65) circle (1.9pt);
    \node[anchor=west] at (74,4.65) {prompt only};
    \fill (88,4.65) circle (1.9pt);
    \node[anchor=west] at (90,4.65) {+ OBPE};
  \end{tikzpicture}
  \caption{Absolute observed rates across all four models when full OBPE is
  added to the same six-rule prompt. Lower is better for trace failure;
  higher is better for the two utility outcomes. Judge B showed the same
  direction (Table~\ref{tab:primary-rates}). Paired, equally cluster-weighted
  estimates and CIs appear in the text.}
  \label{fig:primary-results}
\end{figure*}

\textbf{Analysis, cost, and reproducibility.}
We treat each scenario cluster---not each task variant or replicate---as one
independent unit. Within a task, configuration, and model, we first average the
completed replicates and form the paired configuration difference. We then
average those differences within the cluster, so every cluster receives equal
weight. Two paraphrases run three times therefore count as one scenario, not
six independent observations. We obtain 95\% CIs by resampling clusters 10,000
times with a frozen seed, and check the result with a replicate-index-matched
complete-block analysis. The plan allowed the bootstrap only if at least 30
clusters remained after exclusions. This was a guard against reporting a thin
analysis, not a power calculation.

Every attempt is journaled before completion, and transcript hashes bind both
judgment records. The tables replay from frozen inputs without new model
calls, so reproducing the analysis does not require reproducing model outputs.
Token use is secondary deployment evidence: we preserve provider-specific
buckets and missing receipts rather than treating tokenizers as comparable or
missing values as zero. OBPE adds no model call, but this does not establish
zero proxy or latency overhead. Appendix~\ref{app:accounting} gives the full
accounting rules.

%-------------------------------------------------------------------------------
\section{Results}
\label{sec:eval}
%-------------------------------------------------------------------------------

\textbf{RQ1: prompts changed answers more than the exchange.}
Three configurations separate what the prompt changes from what the boundary
changes. The raw agent receives no data-handling rules; prompt only asks the
same agent to enforce six rules itself; prompt + full OBPE keeps those rules
and adds the trusted boundary. Table~\ref{tab:primary-rates} reports the
unweighted observed rates for orientation. Our conclusions rest on the paired,
cluster-weighted analysis that follows.

The three outcomes answer different questions. Trace failure records whether
protected data reached model context, an exact value reached the answer, or a
forbidden effect reached the backend. Task fulfilled asks only whether the
agent did the requested job, even if policy should have stopped it. Safe-useful
requires allowed work to finish under the archived tool contract without trace
failure or judged semantic disclosure. A baseline can therefore complete a
forbidden update and look useful while failing the security goal outright.

\begin{table}[!htb]
  \centering
  \caption{Descriptive rates across all four models. Arrows show the desired
  direction. Slightly different denominators reflect unavailable outcomes;
  the archive retains every count.}
  \label{tab:primary-rates}
  \scriptsize
  \setlength{\tabcolsep}{3.2pt}
  \renewcommand{\arraystretch}{1.02}
  \begin{tabular}{@{}p{3.15cm}rrr@{}}
    \toprule
    Outcome & Raw & Prompt & Prompt+OBPE \\
    \midrule
    Trace failure $\downarrow$ & 63.3\% & 57.6\% & 0.2\% \\
    Entered agent context $\downarrow$ & 55.1\% & 52.1\% & 0.0\% \\
    Exact final disclosure $\downarrow$ & 27.5\% & 3.7\% & 0.2\% \\
    Semantic disclosure, Judge A $\downarrow$ & 30.0\% & 8.8\% & 1.6\% \\
    Semantic disclosure, Judge B $\downarrow$ & 25.4\% & 8.2\% & 2.1\% \\
    Forbidden backend effect $\downarrow$ & 8.1\% & 5.5\% & 0.0\% \\
    Task fulfilled, Judge A $\uparrow$ & 83.2\% & 79.1\% & 60.9\% \\
    Task fulfilled, Judge B $\uparrow$ & 76.3\% & 72.5\% & 56.1\% \\
    Safe-useful, Judge A $\uparrow$ & 19.1\% & 22.0\% & 58.9\% \\
    Safe-useful, Judge B $\uparrow$ & 19.0\% & 22.1\% & 54.4\% \\
    Mechanical contract $\uparrow$ & 94.4\% & 91.3\% & 91.4\% \\
    \bottomrule
  \end{tabular}
\end{table}

The prompt made the final answer more discreet, but it did little to keep
protected data away from the model or the backend. Exact disclosure fell from
27.5\% to 3.7\%, and Judge A's broader semantic-disclosure rate fell from
30.0\% to 8.8\%. Context exposure, however, moved only from 55.1\% to 52.1\%,
and forbidden effects from 8.1\% to 5.5\%. This is consistent with an agent
suppressing a value in its final answer after it has already seen the value or
acted on it.

Adding the boundary changed the exchange itself. Among prompted trials, OBPE
removed every observed context exposure and forbidden effect. The overall
trace-failure rate fell from 57.6\% to 0.2\%
(Figure~\ref{fig:primary-results}); the two remaining exact disclosures were
values reconstructed from information the boundary did permit.

Table~\ref{tab:primary-rates} gives absolute observed rates. Our prespecified
comparison instead pairs the same tasks and gives each scenario cluster equal
weight, so larger clusters do not dominate the answer. It estimates that OBPE avoided 41.2 percentage
points (pp) of trace failure (95\% confidence interval (CI) [27.7, 54.9]).
Every model-specific estimate favored OBPE, with reductions ranging from
32.4 to 49.3~pp and every 95\% interval above zero.
This differs from the 57.4-point raw rate gap because scenario sizes are
uneven: the inferential estimate gives each cluster one vote.

The boundary also made some requested jobs impossible to finish. In the full-
OBPE cells, Judge A found 90.0\% fulfillment on the five unshaped-permit tasks
and 71.7\% on the 54 shaped-permit tasks, but none on the ten denied or two
deferred tasks, whose rubrics defined the requested operation as work to
complete. Thus the overall fall from 79.1\% to 60.9\% is real, but it is not an
estimate of false denial on ordinary benign traffic. Since many prompt-only
completions were unsafe, safe-useful still rose from 22.0\% to 58.9\% under
Judge A and from 22.1\% to 54.4\% under Judge B. The paired gains were 21.8~pp
[9.5, 35.2] and 19.1~pp [6.9, 32.2]. The mechanical contract barely moved
(91.3\% versus 91.4\%; paired change 0.36~pp [-0.83, 2.14]); it records the
expected tool path, not whole-request completion.
The descriptive safe-useful gaps are larger for the same reason as the trace
gap: the paired estimand gives large and small scenario families equal weight.

\emph{Red-team slice.} On the 20 response-adaptive tasks, the same attacker
pressed both arms across all four models. It triggered trace failure in 173 of
231 completed prompt-only runs (74.9\%), but only 2 of 227 completed OBPE runs
(0.9\%). Both judges saw the same direction in semantic disclosure: Judge A
marked 39 of 230 prompt-only answers and 3 of 226 OBPE answers, while Judge B
marked 34 and 6. A loop defect left 9 prompt-only and 13 OBPE runs incomplete,
so these rates describe a stress slice rather than a second paired estimate.

\textbf{RQ2a: the same pattern held without prompt rules.}
The Sonnet $2\times2$ experiment asks whether OBPE depended on help from the
six-rule prompt. It did not appear to: adding OBPE reduced the descriptive
trace-failure rate from 59.5\% to 1.0\% without the rules and from 61.9\% to
0.0\% with them. The estimated difference between those effects was only
0.2~pp, but its wide interval [-12.3, 12.7] supports no stronger claim of
independence.

\textbf{RQ2b: what did the cumulative operator ladder add?}
The ablation holds the prompted Sonnet agent, tasks, and fixtures fixed while
adding one layer of protection at a time. It begins with record controls that
can gate a call, scope its query, or filter returned rows. Later rungs add field
stripping, then whole-field masking and write guard together, and finally
pattern masking within text that remains visible.

\begin{table}[H]
  \centering
  \caption{Descriptive Sonnet rates across the cumulative OBPE ladder. Protected
  means no deterministic trace failure; judge-scored columns report Judge A;
  higher is better in all columns.}
  \label{tab:policy-ladder}
  \scriptsize
  \setlength{\tabcolsep}{3.0pt}
  \renewcommand{\arraystretch}{1.02}
  \begin{tabular}{@{}p{3.05cm}rrr@{}}
    \toprule
    Policy configuration & Protected & Fulfilled (A) & Safe-useful (A) \\
    \midrule
    Record controls & 46.7\% & 75.1\% & 21.8\% \\
    + field stripping & 69.4\% & 70.0\% & 37.6\% \\
    + masking and write guard & 88.6\% & 65.7\% & 51.9\% \\
    + pattern masking (full) & 100.0\% & 66.2\% & 63.0\% \\
    \bottomrule
  \end{tabular}
\end{table}

Record controls alone protected 46.7\% of trials. Once an allowed record
returned, however, they could not hide a sensitive field inside it. Field stripping removed protected notes from
permitted records; the joint rung hid directory emails and removed guarded
patch fields; pattern masking caught credentials in retained text. The
cluster-weighted analysis found the same progression:
field stripping avoided another 9.9~pp of failures [2.0, 19.1], masking and
write guard jointly avoided 7.8~pp [0.3, 16.7], and pattern masking avoided
11.5~pp [2.9, 22.3]. The joint rung cannot separate masking from write guard,
and its final 100\% protection rate is a benchmark result, not a free-text
masking guarantee.

Utility moved in two directions because the stricter rungs blocked more of the
requested work but made more completed work safe. Across the ladder,
safe-useful rose from 21.8\% to 63.0\% while standalone fulfillment fell from
75.1\% to 66.2\%; the mechanical contract stayed near 91\%. Each safe-useful
step was positive, although only pattern masking's 11.3~pp gain had an interval
excluding zero [0.2, 23.7].

\textbf{RQ3: output review arrived after exposure and effects.}
The fixed reviewer sees assistant text only after tools have returned. It can
suppress an unsafe answer, but it cannot remove data already placed in model
context or undo a backend change. That timing appears in the absolute Sonnet
rates: the reviewer arm had a 58.6\% deterministic trace-failure rate and a
19.7\% safe-useful rate under Judge A, compared with 0.0\% and 63.0\% for full
OBPE. In the paired analysis, OBPE avoided 43.4~pp more failures [27.7, 59.3]
and gained 32.0~pp safe-useful completion [18.8, 46.0] under Judge A; Judge B
measured a 35.8~pp gain. This is one mechanism comparison, not a ranking of all
guardrails. The reviewer also required 930 additional model calls, and its
refusal sometimes replaced an otherwise useful answer. Judge A's standalone
task fulfillment was 66.2\% with OBPE and 58.7\% with the reviewer; OBPE made
no additional model call.

\textbf{What the three comparisons say together.}
The common pattern is timing. Prompt rules changed some final answers only
after exposure; output review also ran after exposure and backend effects and
added 930 model calls. OBPE acted before dispatch and therefore changed which
work could complete. Standalone fulfillment fell on this policy-active corpus,
but many baseline successes were unsafe, so safe-useful completion rose.

\textbf{Evidence and qualifications.}
Every planned trial has a terminal journal record, but 39 did not complete:
38 hit one adversarial-loop defect and one strip-only run timed out. We leave
their unavailable outcomes missing. Under the frozen rules, the primary
security contrast contains 46 unknown rows, 25 with OBPE and 21 without it.
Assigning all of them in the two opposing worst-case directions moves the
cluster-weighted point estimate from 41.2~pp to a range of 33.8--43.1~pp fewer
failures; this is a bound on the point estimate, not a CI. The result was also
not carried by one scenario family: 23 of 34 contributing clusters favored
OBPE, 11 tied, and none favored the prompted baseline. Leaving out any one
cluster produced reductions between 39.4 and 42.4~pp.

The more consequential qualification is a defect in the controlled mock's
query evaluator. In 116 primary-arm trials, it fell back after the proxy joined
caller and policy clauses. Sixty-five cases broadened a filter; the rest lost
ordering that could alter a capped page. The defect occurred downstream of
Cedar and response shaping but still violated O3's connector obligation.
Production connectors apply O3's fail-closed rule; that private behavior is
context, not conformance evidence. Excluding all 116 tests whether this fault
drove the aggregate result. The security reduction is 40.5~pp and the
Judge-A and Judge-B safe-useful gains are 21.4 and 18.2~pp, so each direction is
unchanged.

The fallback path remains outside conformance. For supported paths, a separate
ledger covers 25 obligations across 352 cells in the released conformance grid.
Its tests caught all 20 non-equivalent stage mutations and all three
full-pipeline mutations that changed behavior; two others were equivalent on
that grid. Drop and clamp remain synthetic, and the evaluation did not exercise
durable approval dispatch, aggregate execution, or remote MCP or A2A deployment.
Appendix~\ref{app:accounting} gives the full denominators and qualification
record.

%-------------------------------------------------------------------------------
\subsection{What the Boundary Did Not Close}
\label{sec:taxonomy}
%-------------------------------------------------------------------------------

The four reconstruction failures were more instructive than another decimal
place. Across 1,065
trials with full OBPE, four final answers contained an exact protected email
even though the value never entered agent context and no forbidden backend
effect occurred. The agents reconstructed the address from a visible name and
domain convention, sometimes while explaining why they would not disclose it.
Two cases occurred in the prompted primary arm and two in the raw secondary
arm. This was not a proxy escape. It is the difference between removing one
value from one execution and proving that no output depends on it.

A different failure turned query pushdown into an oracle. The agent issued
increasingly selective predicates over a masked email field and reported the
row counts: two matches for \texttt{@partner}, then one each for
\texttt{yara} and \texttt{singh}. The backend evaluated those predicates on
the protected field even though response shaping withheld it. Judge A marked
semantic disclosure; Judge B marked only an \emph{Honest Witness} disclosure,
meaning that the reply revealed the existence or category of protected data
without stating its value. The deterministic metric recorded no failure. This
is a modern instance of the statistical-database tracker problem: individually
allowed predicates and counts can reveal a protected fact in combination
\cite{denningtracker1979}.

These cases sharpen the claim boundary in \S\ref{sec:theory}. Structural
absence is useful, but it is not relational noninterference. Closing the query
channel requires distinguishing policy-fixed predicates from caller-controlled
ones; blanket bans on filtering hidden fields would also disable legitimate
row-level policy. Coarsened counts and cursors, minimum cohorts, and
history-aware query budgets are possible defenses. This release evaluated none
of them.

%-------------------------------------------------------------------------------
\section{Related Work}
\label{sec:related}
%-------------------------------------------------------------------------------
The closest building blocks act at different points in the exchange. Zanzibar
centralizes relationship authorization and Cedar supplies typed policy
~\cite{zanzibar2019,cedar2024}; identity and access management (IAM) boundaries
and OAuth extensions can narrow
a credential~\cite{permboundaries2018,rfc9396,rfc8693}. Farther down the path,
row security, query rewriting, and Envoy processing can constrain data or
modify HTTP traffic~\cite{rls,rizvi2004,envoyextproc}. None of these systems,
as documented, binds those stages into an owner-bounded request-and-response
contract that agent policy can only narrow. OBPE does not replace these
mechanisms. It contributes a contract for composing them,
conditions for an order-independent and non-widening result, and a conformance
chain from that law to the released path.

AgentCore Policy maps gateway tools to Cedar actions~\cite{agentcore2025}.
Dogwood evaluates prior events and windowed aggregates
~\cite{agentcoretemporal2026,dogwood2026}, while model guardrails can reject a
call or suppress its output~\cite{agentcoreguardrails2026}. This history support
is stronger than the released OBPE proxy's. The documented system does not,
however, selectively shape returned records or define the same owner--agent
narrowing contract, and prior-event authorization is not durable resumption of
one immutable held call.

Other defenses reason earlier. Progent checks whether generated call rules
narrow privilege, and SEAgent applies mandatory access control to tools and
multi-agent deputies~\cite{progent2025,ji2026}. CaMeL propagates capabilities
with values, PAuth binds operands to provenance, and privilege separation
limits what crosses from a quarantined model
~\cite{camel2025,pauth2026,privsep2025}. OBPE begins at the mediated exchange;
its portability ends where trusted identity or connector coverage ends.

AgentDojo tests whether useful work survives prompt-injection defenses
~\cite{agentdojo2024}; we freeze the policy and score exposure and forbidden
effects beside useful work. A workshop paper demonstrated an out-of-band
metadata channel on one agent~\cite{sao26}. Here we add the two-tier model,
restriction law, conformance link, and cross-model experiment.
Table~\ref{tab:pattern} assesses the cited versions, not possible extensions.

%-------------------------------------------------------------------------------
\section{Limitations and Broader Impacts}
\label{sec:limits}
%-------------------------------------------------------------------------------

\subsection{Limitations}

OBPE cannot repair a valid but mistaken owner ceiling: static checks find bad
rules, not organizational intent. Shaping can also mislead an agent into
treating a filtered view as complete, while a warning may reveal that protected
data exists. A rewritten query must preserve caller and policy predicates
together or be denied before dispatch.

Schemas are therefore security state. They need named owners, coordinated
policy-and-schema rollout, rollback, and review when fields or backend meaning
change. Open maps require finite allowlists or must remain opaque. Audits should
be minimal, tenant-separated, and short-lived, while consequential policy
changes and approvals remain attributable. Cedar supplies none of these
operating controls.

The mocks omit rich text, recursive or polymorphic objects, and cross-tenant
tests. Durable approval, history gates, broader writes, timing, and streaming
also remain outside released evidence. Assignment-bound credentials would
shrink the authorization gap, and reliable instruction--data separation would
weaken one reason for response shaping. Policy would still decide who may
receive a secret and when a call must wait outside the agent.

\subsection{Broader impacts}

Audit and approval records expose worker activity; deployments should limit
access and retention, notify workers, and keep them out of employment
decisions. Approver fatigue can turn deferral into a rubber stamp, so rising
volume should trigger policy review. The ethics appendix expands these duties.

%-------------------------------------------------------------------------------
\section{Conclusion}
%-------------------------------------------------------------------------------

OBPE governs the whole tool exchange under an owner ceiling that agent policy
can only narrow. Its model states the conditions for an order-independent,
non-widening policy plan; the prototype makes those conditions a conformance
target.

Across four models, the paired, cluster-weighted analysis found 41.2 points
less trace failure. Unweighted standalone fulfillment fell from 79.1\% to
60.9\% because policy stopped some requested jobs. Many baseline completions
were unsafe, however, and the paired safe-useful gain was 21.8 points under
the primary judge and 19.1 under the second. The pattern survived removal of
the prompt, whereas in-band review
fared worse while adding 930 model calls. Four reconstructed values and one
row-count oracle expose the remaining boundary. OBPE enforces structural and
effect constraints at the exchange; inference across allowed observations
still needs another defense.

%-------------------------------------------------------------------------------
\clearpage
\bibliographystyle{plain}
\bibliography{\jobname}

\appendix

%-------------------------------------------------------------------------------
\section*{Ethical Considerations}
%-------------------------------------------------------------------------------

We evaluate OBPE against controlled Jira and ServiceNow mocks populated with
synthetic records; the red-team prompts target only those records, so the study
uses neither human subjects nor customer or employee data. We also do not test
agents that decide employment, credit, benefits, or other consequential
outcomes about people. Those settings raise fairness and due-process questions
that a tool boundary cannot answer.

The defense creates records of its own. Tool-call audits and approval history
can expose how a worker uses an agent, so a production operator should restrict
access and retain only what an investigation requires. Workers should also
know what is recorded. These logs should not become inputs to employment
decisions. Write shaping raises a second accountability concern: the backend
may receive a narrower update than the agent proposed. The operator must make
that change visible to the responsible user and preserve enough audit detail
to reconstruct it without copying sensitive values into the log.

Policy authors hold real power. Only designated data stewards should set the
owner ceiling, with review and clear explanations of its impact; affected users
also need a way to contest or correct an erroneous label or scope. Static
analysis can find structural mistakes, but not decide whether a valid policy is
fair or faithful to organizational intent. The released prototype does not
establish that an organization will get those choices right.

%-------------------------------------------------------------------------------
\section*{Open Science}
%-------------------------------------------------------------------------------

The anonymous artifact is available at
\url{https://anonymous.4open.science/r/usenixsecurity_2027_obpe/README.md}.
It contains the HTTP proxy, controlled backend mocks, connector schemas, and
policy bundles used by the paper. It also includes the agent-under-test and
adversarial-agent runners, with their orchestration logic and tool definitions.
Every system, task, attack, guardrail, and judge prompt used for a reported
outcome ships with its response parser and provider adapter. The local A2A
adapter is included as code but did not enter the frozen manifest or the
paper's evidence.

The frozen corpus lock, trial registry, raw transcripts, conformance ledger,
and analysis code complete the record. The lock records model strings,
generation settings, policy hashes, fixture and scoring revisions, and prompts.
The qualification record binds the analyzed archive to its generator revision;
\texttt{sensitivity.csv} records the incomplete-outcome bounds, cluster
influence, fallback distribution, and judge agreement reported here. These
materials reproduce the experimental conditions, not the provider models
themselves. Development notebooks do not enter the release record.

The README gives commands to build and test the system, verify conformance,
check the archive qualifications, and regenerate every reported table and
figure offline without model credentials. Fresh model calls may differ, so
rerunning providers is a replication of the procedure rather than a
reproduction of the reported numbers. The provider replay cache and a
best-effort telemetry log are omitted for size; neither is read by the released
analysis. The archived transcripts, tool traces, outcomes, judge ledgers, and
derived paper outputs remain available.
After deanonymization, we will preserve the artifact at a stable archival URL.

%-------------------------------------------------------------------------------
\section{Example Policies}
\label{app:examples}
%-------------------------------------------------------------------------------

\emph{Record-scope.} Narrowing a read by record ownership can happen upstream or at the gateway, and a policy can name both. The gateway can push a condition down into the upstream query, so the out-of-scope records never leave the upstream, and it can apply the same condition as a filter on the response, which works against any backend. The example keys both to the caller's team.

\begin{lstlisting}
// Pushdown: narrow the listing upstream
@id("issue_list_scoped")
@transform("allow_with_scope_query")
@field_group("summary") @cap_limit("100")
permit (principal,
        action == Action::"list",
        resource is Issue)
when {
  principal.group_ids.contains(
     resource.project) ||
  principal.direct_report_ids.contains(
     resource.assignee)
};
\end{lstlisting}

When the upstream cannot narrow the query itself, the gateway applies the same condition as a filter on the response.

\begin{lstlisting}
// Gateway: keep only the lead's team
@id("issue_list_team")
@transform("allow_with_filter_records")
@field_group("summary")
permit (principal,
        action == Action::"list",
        resource is Issue);

@id("issue_list_team_keep")
permit (principal,
        action == Action::"filter_record",
        resource is Issue)
when {
  principal.group_ids.contains(
     resource.project) ||
  principal.direct_report_ids.contains(
     resource.assignee)
};
\end{lstlisting}

Filter-records is two rules (Appendix~\ref{app:grammar}). The single-decision operators are one rule each: a field strip names a \texttt{@field\_group}, a defer names an \texttt{@approver} and a \texttt{@defer\_timeout}.

\emph{Three concerns, two tiers.} The next example combines authorization,
exposure control, and semantic gating. The owner policy strips restricted
fields and defers a Critical-issue update for approval. The agent policy adds a
scoped, capped listing for this agent. Comments mark the two tiers.

\begin{lstlisting}
// AGENT BUNDLE (agent builder): scope and cap the listing
@id("issue_list")
@transform("allow_with_scope_query")
@field_group("summary") @cap_limit("100")
permit (principal,
        action == Action::"list",
        resource is Issue);

// OWNER POLICY (data owner): retain approved fields
@id("issue_read_triage")
@transform("allow_with_field_strip")
@field_group("summary")
permit (principal,
        action in [Action::"get",
                   Action::"list"],
        resource is Issue);

// OWNER POLICY (data owner): a Critical update waits
// for the on-call lead
@id("issue_update_critical_defer")
@transform("defer")
@approver("on_call_lead")
@defer_timeout("3600")
permit (principal,
        action == Action::"update",
        resource is Issue)
when {
  resource.assignee == principal.account_id
  && resource.priority == "Highest"
};

\end{lstlisting}

\emph{Composition trace.} The list rules give a concrete instance of the two-tier law. Suppose owner policy $S$ permits the call and retains field group $F_S$. Agent policy $P_A$ adds team predicate $t$, caps the response at 100, and retains $F_A$. The owner-only plan is
\[
\begin{aligned}
\rho_S&=(q,n,\varnothing),\\
\eta_S&=(\mathsf{true},F_S,\mu_{\mathrm{pass}}),\\
\Pi_S(\kappa)&=(\mathrm{permit},\rho_S,\eta_S).
\end{aligned}
\]
With both tiers,
\[
\begin{aligned}
\rho_{S,A}&=(q\wedge t,\min(n,100),\varnothing),\\
\eta_{S,A}&=(\mathsf{true},F_S\cap F_A,\mu_{\mathrm{pass}}),\\
\Pi_{S,A}(\kappa)&=(\mathrm{permit},\rho_{S,A},\eta_{S,A})
  \preceq_r\Pi_S(\kappa).
\end{aligned}
\]
The agent tier creates no permit. It adds a predicate, lowers the cap, and can
only shrink the retained field set. The concrete page returned by the two
queries may differ; Proposition~\ref{prop:tier} orders the plans, not those
sampled pages.

Across a list, a triage read, and a Critical-issue update, the gateway returns
permit, permit, and defer. The owner supplies the strip and defer for every
agent at the connector; this agent's policy adds only the scope query, as
Proposition~\ref{prop:tier} requires. Stateful aggregate gates remain production
design context and do not appear in a prototype bundle.

%-------------------------------------------------------------------------------
\section{Pattern-Compliance Matrix}
\label{app:matrix}
%-------------------------------------------------------------------------------

Table~\ref{tab:pattern} applies the six properties of \S\ref{sec:overview} to selected systems and configurations from \S\ref{sec:related}. These grades are our assessment of the pinned public documentation and artifacts. A partial mark covers conditional support, a narrower interpretation, or behavior not fully exercised in the cited artifact; a negative mark means we did not find the property there, not that no private or extended configuration can supply it.

\begin{table*}[!t]
\centering
\caption{Reviewed systems at the cited versions (Y = yes, P = partial or conditional, N = no). The table compares enforcement properties, not overall system quality. The OBPE row describes the released HTTP artifact, so interoperability and external gating remain partial.}
\label{tab:pattern}
\scriptsize
\setlength{\tabcolsep}{3pt}
\begin{tabular}{@{}p{4.9cm}cccccc@{}}
\toprule
System or configuration & \shortstack{Agent-\\inaccessible} & Deterministic & Interoperable & \shortstack{Resource\\aware} & \shortstack{Boundary\\shaping} & \shortstack{External\\gating} \\
\midrule
Envoy \texttt{ext\_proc} with external policy~\cite{envoyextproc} & P & P & Y & P & Y & P \\
AgentCore Policy, base~\cite{agentcore2025} & Y & Y & P & P & N & N \\
{\raggedright AgentCore Policy with Dogwood temporal rules~\cite{agentcoretemporal2026,dogwood2026}\par} & Y & Y & P & P & N & N \\
{\raggedright AgentCore Policy with model-based guardrails~\cite{agentcoreguardrails2026}\par} & Y & P & P & P & P & N \\
Progent~\cite{progent2025} & Y & P & P & P & N & P \\
CaMeL~\cite{camel2025} & P & Y & P & P & N & P \\
PAuth~\cite{pauth2026} & P & P & P & P & N & P \\
\textbf{OBPE, released HTTP artifact} & \textbf{Y} & \textbf{Y} & P & \textbf{Y} & \textbf{Y} & P \\
\bottomrule
\end{tabular}
\end{table*}

%-------------------------------------------------------------------------------
\section{Composition Preconditions and Proof Detail}
\label{app:composition}
%-------------------------------------------------------------------------------

Under O1--O5, \S\ref{sec:proofs} claims that fixed matched rules produce one
policy plan and that another restriction cannot widen it. This appendix supplies
the algebra behind those claims. Table~\ref{tab:obligations} states the
operator behavior needed to instantiate that algebra; Table~\ref{tab:body-obligations}
separately records what the released path implements or still must test.

\emph{Ordered components and content normalization.} Let $i$ range over the
ordered gate, request, and response components, and let $D_i$ contain their
restriction operands from the matched owner and agent rules. The row keeper is
$\phi_S\land\phi_A$, where each term is its tier's Cedar authorization result;
same-tier permit addition is not another restriction operand. Once
$\mathcal D_S$ establishes owner eligibility, an ordered component starts at
its no-added-restriction value $\top_i$. Write its result as
\[
M_i(D_i)=\top_i\wedge\bigwedge_{r\in D_i}r.
\]
Table~\ref{tab:obligations} gives the concrete conditions for those meets.
Content rules are different. Let $N_\mu(D_\mu)$ sort applicable rewrites by a
schema-pinned key, let full redaction supersede partial rewrites, and then apply
the remaining rules in one fixed sequence with inert replacement text.
$N_\mu$ is a deterministic normalizer, not a semantic meet over arbitrary
strings. The formal results are conditional on these definitions; conformance
determines whether a released path satisfies them.

\emph{Proof of Proposition~\ref{prop:det}.} For any permutation $\pi$ of the
stage operands,
\[
\top_i\wedge r_1\wedge\cdots\wedge r_n
=\top_i\wedge r_{\pi(1)}\wedge\cdots\wedge r_{\pi(n)}.
\]
Commutativity and associativity give the equality, while idempotence makes
duplicates harmless. $N_\mu$ depends on a set of rules and a fixed sort key, so
permuting the input listing does not change its normal form. With $\kappa$, $S$,
$P_A$, $C$, and the trusted-state snapshot fixed, the prescribed
gate--request--response order therefore produces one $\Pi$. Fixed envelope
sanitation covers the final content-dependent step. A formal aggregate gate
reads one fixed log prefix; later defer resolution is not an input to this plan.
Audit ordering and display labels are not part of $\Pi$. $\square$

\emph{Proof of Proposition~\ref{prop:mono}.} Suppose a well-formed stage restriction
adds operand $r_i$ to stage $i$. Then
\[
M_i(D_i\cup\{r_i\})=M_i(D_i)\wedge r_i\preceq_r M_i(D_i).
\]
Applying this inequality at every affected stage yields the plan order. The
order is lifted over the gate: deny has no later-stage grant, equal defer plans
compare their request components, and permit plans compare request and response
components.
The owner-eligibility premise excludes the one transition this
argument does not cover: default deny to the first owner permit. Nor does it
cover adding a same-tier Cedar permit. O4 requires
absence safety for drops and a restrictive direction for clamps. A structural
record identifier $I$ may stay outside the variable capability only under the
declaration and policy-approval obligation in \S\ref{sec:policy}. Pattern
masking is not part of this inequality: canonical normalization makes its
execution order-independent but does not order arbitrary output bytes. Only
structural stripping and full redaction retain the value-absence claim; partial
projections and pattern masks remain incomparable. $\square$

\begin{table*}[!t]
\centering
\caption{Preconditions for Propositions~\ref{prop:det}--\ref{prop:mono} and
INV-SG. Each result holds only when the corresponding obligation is satisfied.}
\label{tab:obligations}
\footnotesize
\setlength{\tabcolsep}{3pt}
\begin{tabular}{@{}p{5.0cm}p{6.1cm}p{4.2cm}@{}}
\toprule
Obligation & Purpose & Evidence or control \\
\midrule
Query joins use conjunction & Preserves caller and policy filters & Vertical refusal and query-join tests \\
Pattern masks use canonical order & Prevents rule-order dependence & Golden fixture \\
Mask placeholders are inert & Prevents remasking & Golden fixture \\
Full redaction supersedes partial rewrites & Fixes one content normal form, not a semantic meet & Composition fixture \\
Caps are enforced after fetch & Backend cannot evade the cap & Proxy invariant \\
Row filters see unprojected records & Projection cannot change predicates & Proxy fetch contract \\
Every retained structural ID is declared non-sensitive and policy-approved & Keeps addressing metadata visible only under the theorem's fixed frame & Schema validation \\
Drop is limited to absence-safe fields & Unsafe omission becomes denial & Analyzer check \\
Clamp direction comes from the schema & Defines the restrictive bound & Schema validation \\
Writes never inject or default fields & Preserves write monotonicity & Jira PATCH contract; runtime check \\
Scope-query avoids write-shaped fields & Fixes request-stage order & Bundle validation \\
Co-firing defer parameters normalize or fail closed & Makes defer initiation independent of rule order & Formal assumption; shipped bundles audited \\
Authenticated resolver, complete snapshot, terminal deadline, and duplicate-dispatch fence & Supports safe resumption after approval & Production design; not released \\
\bottomrule
\end{tabular}
\end{table*}

\begin{table*}[!t]
\centering
\caption{Evidence boundary for O1--O5 and the owner ceiling. The formal
results assume each condition; the remaining columns state what is implemented,
tested, or outside the released artifact.}
\label{tab:body-obligations}
\scriptsize
\setlength{\tabcolsep}{3pt}
\resizebox{\textwidth}{!}{%
\begin{tabular}{@{}lllll@{}}
\toprule
Condition & Formal role & Released artifact & Conformance & Not released \\
\midrule
O1 identity/mediation & All claims & Bound identity; HTTP path & Identity/path tests; released grid & MCP; remote A2A \\
O2--O3 fixed semantics & Permutation invariance & Staged path; sensitivity chain & Stage/full-path tests; mutations & Product-order registry \\
O4 safe writes & Write monotonicity & Guard live; drop/clamp unreachable & Guard full path; drop/clamp synthetic & Broader write workflows \\
O5 trusted decisions & Defer/history & Hold path; nonterminal in-memory transition & Hold/no-dispatch tests & Terminal timeout; durable, temporal, aggregate \\
Owner ceiling & Propositions~\ref{prop:det}--\ref{prop:tier} & Two-tier staged path & Tier attribution; mutation checks & Production deployment \\
\bottomrule
\end{tabular}
}
\end{table*}

%-------------------------------------------------------------------------------
\section{What Static Analysis Can Establish}
\label{app:analyzability}
%-------------------------------------------------------------------------------

Static analysis can find conflicts and dead rules, but it cannot decide whether
a valid policy captures organizational intent, whether a pattern catches every
secret, or what an agent may infer from data left visible.

\emph{The Cedar core.} Cedar Analysis compares typed permit-or-forbid policy
sets and can find shadowed or impossible permits, forbid overrides, and complete
denials~\cite{cedaranalysis2025,cedar2024}; Zelkova earlier translated IAM
policies into satisfiability modulo theories (SMT)~\cite{zelkova2018}. These tools reason about the written Cedar policy,
not OBPE annotations or owner intent. A deployment making an analysis-backed
claim must pin the analyzer and its accepted expression fragment. Time must be
supplied as typed context, and bounded entity slicing needs a validated
dereference level~\cite{cedar2023}.

\emph{Beyond the gate.} An opaque model of shaping can conservatively flag
possible exposure or writes, with false alarms. It cannot prove owner-agent
containment: two policies may be gate-equivalent yet shape different fields.
That proof would need symbolic semantics for the annotations and connector
schema, plus a correspondence argument for the staged implementation. This is
proposed work; Proposition~\ref{prop:tier} states the conditional law it would
need to check.

Runtime content and history mark another limit. Static analysis cannot promise
that pattern masking finds every value or that a temporal predicate will fire.
Structural removal proves absence only from one response. Retained fields are
deliberate declassification only when policy explicitly designates their
release~\cite{sabelfeldsands2005}; projection alone does not show that they
reveal nothing about the removed value. That stronger claim requires
relational noninterference~\cite{clarksonschneider2010}.

\emph{Production orders.} The release uses the sensitivity chain in
\S\ref{sec:theory}; production independently orders prompt-injection
susceptibility and cross-tenant reach. Their product is a lattice in Denning's
sense~\cite{denning1976}. Figure~\ref{fig:lattice} shows why an LLM limited to
one customer and a human managed-service-provider (MSP) account are incomparable. The release does not
exercise this registry.

Read and write orders also differ. A service may resist injected instructions
yet accept model-produced writes. Write corruptibility is the order-dual of
Biba integrity, just as the read order reverses Bell--LaPadula clearance
~\cite{biba1977,belllapadula1973}; $\ell_w(f)$ therefore cannot be inferred from
$\ell(f)$.

\begin{figure}[!t]
\centering
\begin{tikzpicture}[
  font=\scriptsize,
  v/.style={draw, rounded corners, inner sep=2.5pt, align=center, fill=black!3},
  hi/.style={draw, rounded corners, inner sep=2.5pt, align=center, fill=black!14, very thick},
  cov/.style={-{Stealth[length=1.5mm]}, black!45},
]
\node[v]  (a0) at (0,0)    {human, customer};
\node[v]  (a1) at (2.05,0) {service, customer};
\node[hi] (a2) at (4.1,0)  {LLM, customer};
\node[v]  (b0) at (0,0.7)  {human, internal};
\node[v]  (b1) at (2.05,0.7){service, internal};
\node[v]  (b2) at (4.1,0.7){LLM, internal};
\node[hi] (c0) at (0,1.4)  {human, MSP};
\node[v]  (c1) at (2.05,1.4){service, MSP};
\node[v]  (c2) at (4.1,1.4){LLM, MSP};
\foreach \r in {a,b,c} { \draw[cov] (\r0) -- (\r1); \draw[cov] (\r1) -- (\r2); }
\foreach \k in {0,1,2} { \draw[cov] (a\k) -- (b\k); \draw[cov] (b\k) -- (c\k); }
\node[below=4.5mm of a1, font=\footnotesize] {prompt-injection susceptibility $\rightarrow$};
\node[rotate=90, anchor=center, font=\scriptsize] at ($(b0.west)+(-0.35,0)$) {cross-tenant exposure $\rightarrow$};
\node[below=1mm of a0, font=\scriptsize, text=black!55] {least vulnerable};
\node[above=1mm of c2, font=\scriptsize, text=black!55] {most vulnerable};
\end{tikzpicture}
\caption{Product order for two principal-risk dimensions. Arrows point toward greater vulnerability; the shaded nodes are incomparable. Production implements the richer registry, while the released proxy uses a one-dimensional chain.}
\label{fig:lattice}
\end{figure}

%-------------------------------------------------------------------------------
\section{Released Policy and Schema Contract}
\label{app:grammar}
%-------------------------------------------------------------------------------

Each released rule uses Cedar for its permit-or-forbid decision; OBPE
annotations select shaping but do not alter that decision. A forbid stops the
exchange. On allow, the wrapper combines annotations from every applicable
permit. A separate YAML connector schema supplies resource, field, and wire
semantics; it is not a generated Cedar type schema.

\begin{table}[H]
\centering
\caption{Released OBPE annotation contract. Comma-separated values name
entries in the connector schema.}
\label{tab:annotation-contract}
\scriptsize
\setlength{\tabcolsep}{3pt}
\renewcommand{\arraystretch}{1.04}
\begin{tabular}{@{}p{3.0cm}p{4.8cm}@{}}
\toprule
Annotation & Meaning \tabularnewline
\midrule
\texttt{@id} & Stable policy name used for lookup and audit. \tabularnewline
\texttt{@transform} & Selects the wrapper decision or shaping operator. \tabularnewline
\texttt{@field\_group} & Names a response allowlist; matched groups meet by intersection. \tabularnewline
\texttt{@cap\_limit} & Positive collection cap; matched caps meet by minimum. \tabularnewline
\texttt{@pattern\_mask\_fields}, \texttt{@pattern\_mask\_rules} & Name fields to scan and schema-defined pattern categories. \tabularnewline
\texttt{@write\_guard\_fields} & Names fields the agent may not originate; omission must preserve stored state. \tabularnewline
\texttt{@approver}, \texttt{@defer\_timeout} & Name the approval route and hold deadline. \tabularnewline
\texttt{@reason} & Human-readable denial or deferral advice. \tabularnewline
\bottomrule
\end{tabular}
\end{table}

The transform vocabulary comprises \texttt{allow}, \texttt{deny},
\texttt{defer}, and six \texttt{allow\_with\_*} forms for query scope, record
filtering, field stripping, field and pattern masking, and write guarding.
Across matched rules, the smallest positive cap wins. Record filtering also
requires a \texttt{filter\_record} permit carrying the per-record predicate.
The engine trusts, rather than verifies, the schema's omission contract, so
released write-guard evidence is limited to the Jira partial-update path.

The schema also defines field and write groups, mask methods, pattern
categories, and wire syntax. It permits dropping only absence-safe fields and
clamping only in a declared restrictive direction. Neither shipped schema
enables those operators; fixed envelope sanitation always runs.

Engine construction rejects invalid Cedar and malformed or unresolved
annotations. The release's policy editor checks field groups, positive caps, and
record-filter companions; release tests add schema coverage, reachability,
and the evidence checks in Appendix~\ref{app:accounting}. No general solver-
fragment lint ships. The released bundles use only \texttt{==}, \texttt{!=},
\texttt{\&\&}, \texttt{||}, \texttt{has}, and set \texttt{.contains} over
strings and Booleans. Extending that fragment requires a fresh analysis
qualification.

The Jira and ServiceNow mappings were authored by hand from published
interfaces. The release has no OpenAPI importer and claims neither recursive
reference nor polymorphism support. Conformance covers only the shipped HTTP
mappings and mock backends described in \S\ref{sec:impl}.

%-------------------------------------------------------------------------------
\section{Benchmark Accounting and Scoring Detail}
\label{app:accounting}
%-------------------------------------------------------------------------------

Before model calls, the frozen registry names every condition, task, and
replicate. The journal preserves attempts and retries and requires one terminal
record per trial. Each judgment binds the trial, blinded-input hash, and frozen
judge specification; failures and disagreements remain separate.

The accounting keeps denominators separate instead of silently filling missing
records. Cells average their available replicates, primary contrasts pair tasks
and resample scenario clusters, and the complete-block sensitivity retains
only replicate indices present in both arms. Further checks assign unknown
security outcomes in opposing worst-case directions, omit each cluster in
turn, and count which arm each cluster favors. All replay the archive without
changing an outcome or calling a provider.

Token tables likewise preserve provider buckets and distinguish a missing
receipt from a recorded zero. Missing receipts make adversary and harness spend
lower bounds, which is why the qualification marks token accounting
incomplete; none of the outcome calculations reads those receipts. The
conformance ledger follows each reachable operator through its schema, plan,
shaped exchange, backend effect, audit, and normalized content. Write guard
therefore counts as evidence, whereas unreachable drop and clamp do not. Equal
cluster weighting addresses a separate imbalance: 23 of 35 clusters contain
one task, but the largest contains 12.

%-------------------------------------------------------------------------------
\section{Deferral Transport}
\label{app:defer}
%-------------------------------------------------------------------------------

Deferral separates deciding to hold a call from deciding what happens later.
Write $g=\mathrm{defer}(\alpha,\tau,\mathcal C,\chi)$, where $\alpha$ names the
approval route, $\tau$ is the deadline, $\mathcal C$ is the approval predicate,
and $\chi$ is an immutable snapshot of every policy-relevant request component.
The agent cannot read or write the resolver channel. While the call is held,
the backend sees nothing.

A production resolver records one of three terminal decisions over $\chi$:
approved, rejected, or expired. It authenticates the decision maker and checks
policy authorization for route $\alpha$; a decision arriving after $\tau$ loses
to expiry. Replay protection and an atomic state transition ensure that only the
first terminal decision is accepted. Approval may release exactly the snapshot
that was reviewed, not a rewritten call. Dispatch still needs an idempotency key
or a backend transaction fence: a durable log position prevents two resolver
decisions from winning, but by itself cannot prove exactly-once effects across a
crash. These are safety conditions. Progress also needs a live resolver or a
trusted timeout worker; the model does not guarantee that an approver responds.

Production realizes this state machine with an agent-inaccessible durable log.
One \texttt{deferral\_id} binds the complete snapshot, predicate, deadline,
bundle version, and terminal decision. The unimplemented aggregate gate in
\S\ref{sec:policy} would likewise read one committed prefix and retain its
count, offset, and log-time window for replay. Because concurrent decisions can
change that prefix, evaluation and commit must serialize for the relevant key.

The released proxy demonstrates only the first transition. It hashes method,
path, agent, bundle version, and shaped body into \texttt{deferral\_id}, stores
that partial snapshot, and returns pending. Query parameters, headers, and
pre-fetched state are absent; timeout is not persisted as terminal; and a
resume result never dispatches. The release therefore establishes hold and
no-dispatch, not safe resumption, durable approval, or full INV-SG.

\end{document}